
\documentclass[letterpaper]{article}
\usepackage{uai2019}
\usepackage{times}
\usepackage[margin=1in]{geometry}
\usepackage{hyperref}       
\usepackage{mathtools}
\usepackage{microtype}
\usepackage{graphicx}
\usepackage{subfigure}
\usepackage{booktabs} 
\usepackage[round, sort]{natbib}
\usepackage{mathrsfs}
\usepackage{xcolor}
\usepackage[algo2e]{algorithm2e} 
\usepackage{algorithm,algorithmic}
\usepackage{enumitem}

\usepackage{amsmath,amsfonts,amsthm,bm}
\usepackage{bbm}

\newcommand{\comment}[1]{}

\newtheorem{lemma}{Lemma}









\def\eqref#1{equation~\ref{#1}}









\def\1{\mathbbm{1}}








\def\vtheta{{\bm{\theta}}}
\def\va{{\bm{a}}}

\def\vw{{\bm{w}}}
\def\vx{{\bm{x}}}



\DeclareMathAlphabet{\mathsfit}{\encodingdefault}{\sfdefault}{m}{sl}
\SetMathAlphabet{\mathsfit}{bold}{\encodingdefault}{\sfdefault}{bx}{n}

\def\tC{{\tens{C}}}











\newcommand{\E}{\mathbb{E}}



\DeclareMathOperator*{\argmin}{arg\,min}

\newcommand{\ie}{\emph{i.e.}}
\newcommand{\eg}{\emph{e.g.}}
\newcommand{\versus}{\emph{v.s.}}
\renewcommand{\eqref}[1]{Eq. (\ref{#1})}

\newcommand\independent{\protect\mathpalette{\protect\independenT}{\perp}} 
\def\independenT#1#2{\mathrel{\rlap{$#1#2$}\mkern2mu{#1#2}}}

\definecolor{lowblue}{rgb}{0.158, 0.288, 0.72}
\definecolor{lowred}{rgb}{0.758, 0.188, 0.178}
\newcommand\boldblue[1]{\textcolor{lowblue}{\textbf{#1}}}
\newcommand\boldred[1]{\textcolor{lowred}{\textbf{#1}}}

\newtheorem{proposition}{Proposition}
\newtheorem{remark}{Remark}

\theoremstyle{definition}

\newcommand{\W}{\mathcal{W}}
\newcommand{\vA}{\bm{A}}
\newcommand{\vX}{\bm{X}}
\newcommand{\Sa}{S_{\va}}
\newcommand{\bp}{p_{\bar{S}}}
\newcommand{\bs}{\bar{s}}
\newcommand{\bS}{\bar{S}}
\newcommand{\tc}{\textrm{c}}
\renewcommand{\tC}{\textrm{C}}
\newcommand{\st}{\emph{s.t.}}

\title{Wasserstein Fair Classification}

\author{{\bf Ray Jiang\thanks{\,\, Equal contribution.}} \\
DeepMind \\
rayjiang@google.com\\
\And
{\bf Aldo Pacchiano\footnotemark[1]}  \\
UC Berkeley, DeepMind \\
pacchiano@berkeley.edu 
\\
\And
{\bf Tom Stepleton}   \\
DeepMind \\
stepleton@google.com    \\
\AND
{\bf Heinrich Jiang}   \\
Google Research \\
heinrichj@google.com    \\
\And
{\bf Silvia Chiappa}   \\
DeepMind \\
csilvia@google.com    \\
}

\begin{document}
\maketitle

\begin{abstract}
We propose an approach to fair classification that enforces independence between the classifier outputs and sensitive information by minimizing Wasserstein-1 distances. The approach has desirable theoretical properties and is robust to specific choices of the threshold used to obtain class predictions from model outputs. We introduce different methods that enable hiding sensitive information at test time or have a simple and fast implementation. We show empirical performance against different fairness baselines on several benchmark fairness datasets.
\end{abstract}

\section{INTRODUCTION}
The{\let\thefootnote\relax\footnote{{In Proceedings of the Thirty-Fifth Conference on Uncertainty in Artificial Intelligence, 2019. Code available at \href{https://github.com/deepmind/wasserstein_fairness}{github.com/deepmind/wasserstein\_fairness}.}}}
increasing use of machine learning in decision-making scenarios that have serious implications for individuals and society, such as health care, criminal risk assessment, social services, hiring, financial lending, and online advertising \citep{defauw18clinically, dieterich16compas, eubanks18automating, hoffman18discretion, malekipirbazari15risk, perlich14machine}, is raising concern that bias in the data and model inaccuracies can lead to decisions that are ``unfair'' towards underrepresented or historically discriminated groups. 

This concern has motivated researchers to investigate ways of ensuring that sensitive information (\eg~race and gender) does not ‘unfairly’ influence the decisions. In the classification case considered in this paper, the most widely used approach is to enforce statistical independence between class predictions and sensitive attributes, a criterion called demographic parity \citep{feldman15certifying}. 

In the common scenario in which the model outputs continuous values from which class predictions are obtained through thresholds, this approach would however ensure fairness only with respect to the particular choice of thresholds. 
Furthermore, as independence constraints on the class predictions are difficult to impose in practice, uncorrelation constraints on the model outputs are often imposed instead.

In this paper, we propose an approach that overcomes these limitations by imposing independence constraints directly on the model outputs. This is achieved through enforcing small Wasserstein distances between the distributions of the model outputs corresponding to groups of individuals with different sensitive attributes. We demonstrate that using Wasserstein-1 distances to the barycenter is optimal, in the sense that it achieves independence with minimal changes to the class predictions that would have been obtained without constraints. 
We introduce a Wasserstein-1 penalized logistic regression method that learns the optimal transport map in the logistic model parameters, with a variation that has the advantage of being demographically blind at test time. In addition, we provide a simpler and faster post-processing method. We show that the proposed methods outperform previous approaches in the literature on four benchmark fairness datasets. 

\section{STRONG DEMOGRAPHIC PARITY}
Let $\mathcal{D} =\{(\va^n,\vx^n,y^n)\}_{n=1}^N$ be a sequence of $N$ \emph{i.i.d.} samples drawn from an unknown probability distribution over $\mathcal{A} \times \mathcal{X} \times \mathcal{Y}=\mathbb{N}^k\times\mathbb{R}^d\times\{0, 1\}$. Each datapoint $(\va^n,\vx^n,y^n)$ corresponds to information from an individual (or community): $y^n$ 
indicates a binary class, each element $a^n_i$ of $\va^n$ corresponds to a different sensitive attribute, \eg~to the gender of the individual, and 
$\vx^n$ is a feature vector that, possibly together with $\va^n$, can be used to form a prediction $\hat y^n\in\{0,1\}$ of the class $y^n$. 
We denote with $\mathcal{D}_{\va} = \{ (\va^n, \vx^n, y^n) \in \mathcal{D}~\st~\va^n = \va \}$ the set of $N_{\va}$ individuals belonging to group $\va$. We indicate with $\vA,\vX,Y$ and $\hat Y$ the random variables corresponding to $\va^n,y^n$ and $\hat y^n$, and with $p(\cdot)$ or $p_X(\cdot)$ probability density functions (pdfs), where the latter is used to emphasize the associated random variable. 

Many classifiers, rather than a binary class prediction $\hat y^n$, output a non-binary value $s^n$. In the logistic regression case considered in this paper, $s^n\in\Omega=[0,1]$ indicates the \emph{model belief} that individual $n$ belongs to class 1, \ie~$s^n=\mathbbm{P}(Y=1|\vA = \va^n,\vX=\vx^n)$\footnote{Throughout the paper, we use $\mathbbm{P}(\cdot)$ to indicate probability measures associated with the corresponding probability spaces $(O, \mathcal{F}, \mathbbm{P}(\cdot))$ where $\mathcal{F}$ is a $\sigma$-algebra on the sample output space $O$.}.
From $s^n$, a class prediction $\hat y^n\in\{0,1\}$ is obtained using a threshold $\tau\in\Omega$, \ie~$\hat y^n := \1_{s^n>\tau}$, where $\1_{s^n>\tau}$ equals to one if $s^n>\tau$ and zero otherwise. We call the random variable $S$ corresponding to $s^n$ the \emph{belief variable}, and 
denote with $S_{\va}$ the belief variable for group $\va$, \ie~with pdf $p(S_\va)=p(S|\vA=\va)$. 

We are interested in ensuring that sensitive information does not influence the decisions. 
This is often achieved by imposing that the model satisfies a fairness criterion called \emph{demographic parity (DP)}, defined as
\begin{align*}
\mathbbm{P}(\hat Y = 1 | \vA = \va) = \mathbbm{P}(\hat Y = 1 | \vA = \bar\va)\,, \hskip0.2cm \forall \va,\bar\va \in \mathcal{A} \,.
\end{align*}
DP can equivalently be expressed as requiring statistical independence between $\hat Y$ and $\vA$, denoted as $\hat Y \independent \vA$.

Enforcing demographic parity at a given threshold $\tau$ does not necessarily imply that the criterion is satisfied for other thresholds. 
Furthermore, to alleviate difficulties in optimizing on the class prediction $\hat Y$, relaxations are often considered, such as imposing the constraint $\E[S|\vA=\va]=\E[S| \vA = \bar\va]$ $\forall \va,\bar\va \in \mathcal{A}$, where $\E[\cdot]$ denotes expectation \citep{goh16satisfying,zafar17fairness}.

To deal with these limitations, we propose an approach that enforces statistical independence between $S$ and $\vA$, $S \independent \vA$. We call this fairness criterion \emph{strong demographic parity (SDP)}, as it ensures that the decision does not depend on the sensitive attribute regardless of the threshold $\tau$ used, since $S\independent \vA$ implies $\hat Y\independent \vA$ for any value of $\tau$. 
SDP can be defined as 
\begin{align*}
\hskip0.2cm p_{\Sa} = p_{S_{\bar\va}}, \hskip0.3cm \forall \va,\bar\va\in \mathcal{A} \,.
\end{align*}

In Remark~\ref{remark:1}, we prove that this definition is equivalent to\footnote{We omit the brackets from the expectation to simplify the notation.} 
$$\mathbb{E}_{\tau\sim U(\Omega)} |\mathbbm{P}(S_{\va}>\tau) - \mathbbm{P}(S_{\bar\va}>\tau)| = 0, \hskip0.2cm\forall \va,\bar\va\in \mathcal{A}\,,$$
where $U(\Omega)$ denotes the uniform distribution over $\Omega$. This result leads us to use 
\begin{align*}
\sum_{\substack{\va,\bar\va\in \mathcal{A} \\ \st~\va\neq \bar\va}} \mathbb{E}_{\tau\sim U(\Omega)} |\mathbbm{P}(S_{\va}>\tau) - \mathbbm{P}(S_{\bar\va} > \tau)|\,,
\end{align*}
as a measure of dependence of $S$ on $\vA$, the we call \emph{strong pairwise demographic disparity (SPDD)}.  

\section{WASSERSTEIN FAIR CLASSIFICATION}
\vskip-0.2cm
We suggest to achieve SDP by enforcing the model output pdfs corresponding to groups of individuals with different sensitive attributes, $\{p_{\Sa}\}_{\va\in \mathcal{A}}$,  to coincide with their Wasserstein-1 barycenter distribution $p_{\bS}$. 
The use of the Wasserstein distance is motivated because this distance is defined and computable even between distributions with disjoint supports. This is critical because the empirical estimates $\{\hat{p}_{S_a}\}$, $\hat{p}_{\bS}$ of 
$\{p_{S_a}\}$ and $p_{\bS}$ used to implement the methods and their supports are typically disjoint.

\subsection{OPTIMALITY OF WASSERSTEIN-1 DISTANCE\label{sec:optimality}}
\paragraph{Preliminary.} Given two pdfs $p_{X}$ and $p_{Y}$ on ${\cal X}$ and ${\cal Y}$, a transportation map $T: {\cal X}\rightarrow {\cal Y}$ is defined by $\int_{\mathcal{B}}p_{Y}(y)dy = \int_{T^{-1}(\mathcal{B})}p_{X}(x)dx$ for any measurable subset $\mathcal{B}\subset {\cal Y}$ (indicating that the mass of the set $\mathcal{B}$ with respect to the density $p_Y$ equals the mass of the set $T^{-1}(\mathcal{B})$ with respect to the density $p_{X}$). Let $\mathcal{T}$ be the set of transportation maps from ${\cal X}$ to ${\cal Y}$, and $\tc:{\cal X}\times {\cal Y}\rightarrow[0,\infty]$ be a cost function such that $\tc(x, T(x))$ indicates the cost of transporting $x$ to $T(x)$. In the original formulation \citep{monge81memoire}, the optimal transport map $T^*$ is the one that minimizes the total transportation cost, \ie 
\[
T^* = \argmin_{T\in\mathcal{T}} \int_{x\in {\cal X}} \tc(x, T(x)) p_{X}(x) dx.
\]
To address limitations of this formulation, \citet{kantorovich42on} reformulated the optimal transport problem as finding an optimal pdf $p_{X\times Y}$ in 
the set $\Gamma(p_X, p_Y)$ of joint pdfs on ${\cal X}\times {\cal Y}$ with marginals over $Y$ and $X$ given by $p_X$ and $p_Y$ such that
\[
\gamma^* = \argmin_{\gamma\in\Gamma(f_X, f_Y)} \int_{{\cal X}\times {\cal Y}} \tc(x, y) p_{X\times Y}dxdy.
\]

The $p$-Wasserstein distance is defined as
\[
\mathcal{W}_p(p_X, p_Y) = \hskip-0.1cm\min_{\gamma\in\Gamma(f_X, f_Y)} \left(\int_{{\cal X}\times {\cal Y}} \hskip-0.1cm\textrm{d}(x,y)^p p_{X\times Y}dxdy\right)^{\frac{1}{p}}\hskip-0.1cm,
\]
where ${\cal X}= {\cal Y}$, d is a distance on ${\cal X}$, and $p\geq 1$. 

\paragraph{Fair Optimal Post-Processing.}
Let us first consider the problem of post-processing the beliefs of a model to achieve SDP while making minimal model class prediction changes.

Let $S_1$ and $S_2$ be two belief variables with values in $\Omega=[0,1]$ and pdfs $p_{S_1}$ and $p_{S_2}$, and let $T: \Omega \rightarrow \Omega$ be a transportation map satisfying $\int_{\mathcal{B}}p_{S_2}(y)dy = \int_{T^{-1}(\mathcal{B})}p_{S_1}(x)dx$ for any measurable subset $\mathcal{B}\subset {\Omega}$. Let $\mathcal{T}$ be the set of all such transportation maps. A class prediction $\hat y=\1_{s_1>\tau}$ changes due to transportation $T(s_1)$ if and only if $\tau\in(m_{s_1}^T, M_{s_1}^T)$ where $m_{s_1}^T = \min[s_1, T(s_1)]$ and $M_{s_1}^T = \max [s_1, T(s_1)]$. This observation leads to the following result.

\begin{proposition}
Given two belief variables $S_1$ and $S_2$ in $\Omega=[0,1]$ with pdfs $p_{S_1}$ and $p_{S_2}$, the following three quantities are equal:
\begin{enumerate}
\item[\emph{(i)}] $\W_1(p_{S_1}, p_{S_2}) = \min\limits_{T\in\mathcal{T}} \int_{x\in\Omega} |x-T(x)|p_{S_1}(x)dx$.
\item[\emph{(ii)}]
$\mathbb{E}_{\tau\sim U(\Omega)} |\mathbbm{P}(S_1>\tau) - \mathbbm{P}(S_2 > \tau)|$.
\item[\emph{(iii)}] Expected class prediction changes due to transporting $p_{S_1}$ into $p_{S_2}$ through the map $T^*$ \[\mathbb{E}_{\tau\sim U(\Omega), x\sim p_{S_1}} \mathbbm{P}(\tau\in(m_x^{T^*}, M_x^{T^*}))\,.\] 
\end{enumerate}
\label{prop:1}
\end{proposition}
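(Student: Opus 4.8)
The plan is to prove the chain of equalities by establishing (i)\,$=$\,(ii) through the classical one-dimensional characterization of the Wasserstein-1 distance, and (i)\,$=$\,(iii) by directly computing the inner probability over the threshold $\tau$. I would first recall that on $\Omega=[0,1]$, since both $p_{S_1}$ and $p_{S_2}$ are densities, the source measure is absolutely continuous, so the optimal transport map is the monotone rearrangement $T^* = F_{S_2}^{-1}\circ F_{S_1}$, where $F_{S_i}(t)=\mathbbm{P}(S_i\le t)$ is the CDF and $F_{S_2}^{-1}$ the generalized inverse (quantile function). For this monotone map the Monge problem in (i) coincides with its Kantorovich relaxation, so the minimum in (i) is attained at $T^*$ and equals $\W_1(p_{S_1},p_{S_2})$.

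For (i)\,$=$\,(ii), I would invoke the well-known one-dimensional identity $\W_1(p_{S_1},p_{S_2}) = \int_{\Omega}|F_{S_1}(t)-F_{S_2}(t)|\,dt$, which follows from writing the transport cost as $\int_0^1 |F_{S_1}^{-1}(u)-F_{S_2}^{-1}(u)|\,du$ and applying Fubini to convert the integral over quantiles $u$ into an integral over levels $t$. Then I would observe that $\mathbbm{P}(S_i>\tau)=1-F_{S_i}(\tau)$, so $|\mathbbm{P}(S_1>\tau)-\mathbbm{P}(S_2>\tau)| = |F_{S_1}(\tau)-F_{S_2}(\tau)|$, and, since the uniform density on $\Omega=[0,1]$ equals one, $\mathbb{E}_{\tau\sim U(\Omega)}|\mathbbm{P}(S_1>\tau)-\mathbbm{P}(S_2>\tau)| = \int_0^1|F_{S_1}(\tau)-F_{S_2}(\tau)|\,d\tau$, which is exactly (i).

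For (i)\,$=$\,(iii), I would fix $x$ and compute the inner probability over $\tau$. By the observation preceding the statement, the prediction at $x$ flips precisely when $\tau\in(m_x^{T^*},M_x^{T^*})$, an interval contained in $[0,1]$ of length $M_x^{T^*}-m_x^{T^*}=|x-T^*(x)|$; since $\tau\sim U(\Omega)$, this gives $\mathbbm{P}_\tau(\tau\in(m_x^{T^*},M_x^{T^*}))=|x-T^*(x)|$. Taking the expectation over $x\sim p_{S_1}$ and using Fubini to exchange the two expectations yields $\int_{\Omega}|x-T^*(x)|\,p_{S_1}(x)\,dx$, which equals the minimal transport cost in (i) because $T^*$ is optimal.

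The main obstacle I anticipate is the rigorous justification of the one-dimensional optimal-transport facts, namely the existence and optimality of the monotone map $T^*$ and the quantile/CDF identity, especially handling the generalized inverse when the CDFs are not strictly increasing or contain flat regions. Once these standard facts are in place, the remaining steps are elementary computations with Fubini and the unit uniform density.
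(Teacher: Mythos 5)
Your proof is correct and follows the same overall skeleton as the paper's: both establish (i) $=$ (ii) by passing through the quantile representation $\W_1(p_{S_1},p_{S_2})=\int_0^1|P_{S_1}^{-1}(u)-P_{S_2}^{-1}(u)|\,du$ and converting it into the CDF form $\int_0^1|P_{S_1}(\tau)-P_{S_2}(\tau)|\,d\tau$, and both establish (i) $=$ (iii) by observing that for $\tau\sim U([0,1])$ the probability of the flip interval $(m_x^T,M_x^T)$ is exactly its length $|x-T(x)|$, so the expected prediction changes equal the transport cost of $T$, minimized at $T^*$. The one genuine divergence is how the quantile-to-CDF identity is justified: the paper devotes Appendix C (Lemma~\ref{lemma::inverse_cdf}) to it, arguing via Laisant's integration identity $\int_{x_1}^{x_2} f + \int_{y_1}^{y_2} f^{-1} = x_2y_2 - x_1y_1$ together with a countable decomposition of $[0,1]$ into maximal intervals on which $f-g$ has constant sign, and that lemma assumes the CDFs are differentiable and invertible. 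Your Fubini/layer-cake argument, resting on the equivalence $P^{-1}(u)\le t \iff u\le P(t)$ for generalized inverses, is shorter and strictly more general: it requires no differentiability or strict monotonicity, which is a real advantage here since the empirical distributions the method actually manipulates have staircase CDFs that violate the hypotheses of the paper's appendix lemma. You also make explicit the identification $T^*=P_{S_2}^{-1}\circ P_{S_1}$ and the coincidence of the Monge and Kantorovich formulations for an atomless one-dimensional source, which the paper leaves implicit behind its citation of Rachev; the obstacles you flag (flat regions, generalized inverses) are exactly the right ones, and your plan addresses them.
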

\begin{proof}
In the one-dimensional case of $p_{S_1}$ and $p_{S_2}$, the total transportation cost $\W_1(p_{S_1}, p_{S_2})$ can be written as 
\begin{align*}
\W_1(p_{S_1}, p_{S_2}) &=\text{\footnotemark} \int_{x=0}^1 |P_{S_1}^{-1}(x) - P_{S_2}^{-1}(x)|dx \\
&=\int_{\tau=0}^1 |P_{S_1}(\tau) - P_{S_2}(\tau)|d\tau \tag*{(by Lemma~\ref{lemma::inverse_cdf} in Appendix C)}\\
&= \mathbb{E}_{\tau\sim U(\Omega)} |\mathbbm{P}(S_1\leq \tau) - \mathbbm{P}(S_2 \leq \tau)|\\
&= \mathbb{E}_{\tau\sim U(\Omega)} |\mathbbm{P}(S_1>\tau) - \mathbbm{P}(S_2 > \tau)|\,,    
\end{align*}
where $P_{S_1}$ and $P_{S_2}$ are the cumulative distribution functions of $S_1$ and $S_2$ respectively. This prove that (i) equals (ii).
\footnotetext{The proof of this equality can be found in \citet{rachev98mass}.}

The expected class prediction changes due to applying the transportation map $T$ is given by
\begin{align*}
&\hskip-0.4cm\mathbb{E}_{\substack{\tau\sim U(\Omega)\\[2pt] x\sim p_{S_1}}} \mathbbm{P}(\tau\in(m_x^T, M_x^T))\nonumber\\
& \hskip2cm = \int_{\tau=0}^1 \int_x |x-T(x)|p_{S_1}(x)dx d\tau \nonumber\\
& \hskip2cm = \int_x |x-T(x)|p_{S_1}(x)dx. 
\end{align*}
Thus, 
\begin{align*}
\hskip-0.0cm\W_1(p_{S_1}, p_{S_2}) &= \min_{T\in\mathcal{T}} \int_x |x-T(x)|p_{S_1}(x)dx \\
&\hskip-0.0cm=\int_x |x-T^*(x)|p_{S_1}(x)dx \\
&\hskip-0.0cm =\mathbb{E}_{\substack{\tau\sim U(\Omega)\\[2pt] x\sim p_{S_1}}} \mathbbm{P}(\tau\in(m_x^{T^*}, M_x^{T^*}))\,.
\end{align*}
This prove that (i) equals (iii).
\end{proof}

\begin{remark}\label{remark:1}
Notice that $\textrm{(ii)}=\mathbb{E}_{\tau\sim U(\Omega)} |\mathbbm{P}(S_1 > \tau) - \mathbbm{P}(S_2 > \tau)| = 0$ if and only if $p_{S_1} = p_{S_2}$. Indeed, by Proposition~\ref{prop:1} and the property of the $\W_1$ metric, $\textrm{(ii)}=0$ $\iff \W_1(p_{S_1}, p_{S_2}) = 0 \iff p_{S_1} = p_{S_2}$.
\end{remark}

To reach SDP, we need to achieve $p_{\Sa} = p^*$ $\forall \va \in \mathcal{A}$, where $p^*\in\mathcal{P}(\Omega)$, the space of pdfs on $\Omega$. We would like to choose transportation maps $T$ and a target distribution $p^*$ such that the transportation process from $p_{\Sa}$ to $p^*$ incurs minimal total expected class prediction changes. Assume that the groups are all disjoint, so that the per-group transportation maps $T$ are independent from each other. Let $\mathbbm{T}(p^*)$ be the set of transportation maps with elements $T$ such that, restricted to group $\va$, $T$ is a transportation map from $p_{\Sa}$ to $p^*$ (\ie~
$\mathbbm{T}(p^*)=\{T\in\mathcal{T}(p_S, p^*) \,\mid\, T(S)\big|_{A=\va} = T_\va\in\mathcal{T}_{\va} = \mathcal{T}(p_{\Sa}, p^*)\}$ 
where $\mathcal{T}(p_S, p^*)$ denotes the space of transportation maps from $p_S$ to $p^*$). We would like to obtain
\begin{align*}
    &\hskip-0.1cm\min_{\substack{T\in\mathbbm{T}(p^*)\\ p^*\in \mathcal{P}(\Omega)}} \mathop{\mathbb{E}}_{\substack{\tau\sim U(\Omega)\\[2pt] x\sim p_S}}\mathbbm{P}(\tau\in(m_x^T, M_x^T))\\
    &=\min_{\substack{T\in\mathbbm{T}(p^*)\\ p^*\in \mathcal{P}(\Omega)}} \sum_{\va\in\mathcal{A}} \underbrace{p(A=\va)}_{p_{\va}}\mathop{\mathbb{E}}_{\substack{\tau\sim U(\Omega)\\[2pt] x\sim p_{\Sa}}}\mathbbm{P}(\tau\in(m_x^T, M_x^T))\\
    & =\min_{p^*\in \mathcal{P}(\Omega)} \sum_{\va\in\mathcal{A}}p_{\va}\min_{T\in\mathcal{T}_{\va}}\mathop{\mathbb{E}}_{\substack{\tau\sim U(\Omega)\\[2pt] x\sim p_{\Sa}}} \mathbbm{P}(\tau\in(m_x^T, M_x^T))\\
    & = \min_{p^*\in \mathcal{P}(\Omega)} \sum_{\va\in\mathcal{A}}p_{\va} \min_{T\in\mathcal{T}_{\va}} \int_{x \in \Omega} |x-T(x)|p_{\Sa}(x)dx\\ 
    & = \min_{p^*\in \mathcal{P}(\Omega)} \sum_{\va\in\mathcal{A}} p_{\va}\W_1(p_{\Sa}, p^*)\,.
\end{align*}
Therefore we are interested in
\begin{align}\label{eq:bar}
\bp = \argmin_{p^* \in \mathcal{P}(\Omega)} \sum_{\va \in \mathcal{A}} p_{\va}\W_1(p_{\Sa}, p^*)\,,
\end{align}
which coincides with the Wasserstein-1 barycenter with normalized subgroup size as weight to every group distribution $p_{\Sa}$ \citep{agueh11barycenters}. 

In summary, we have demonstrated that the optimal post-processing procedure that minimizes total expected model prediction changes is to use the Wasserstein-1 optimal transport map $T^*$ to transport all group distributions $p_{\Sa}$ to their weighted barycenter distribution $p_{\bS}$. 

\paragraph{Optimal Trade-Offs.}
We have shown that post-processing the beliefs of a model through optimal transportation achieves SDP (and therefore $\textrm{SPDD}=0$) whilst minimizing expected prediction changes. 
We now examine the case in which, after transportation, SDP is not attained, \ie~SPDD is positive.
By triangle inequality
\begin{align*}
    \text{SPDD} 
    & \leq 2(|\mathcal{A}|\!-\!1) \sum_{\va\in \mathcal{A}} \mathbb{E}_{\tau\sim U(\Omega)} |\mathbbm{P}(S_{\va}>\tau) \!-\! \mathbbm{P}(\bS>\tau)|\\
    & = 2(|\mathcal{A}|\!-\!1) \sum_{\va\in \mathcal{A}} \W_1(p_{\Sa}, p_{\bS})\,.
\end{align*}
 We call this upper bound on SPDD \emph{pseudo-SPDD}. Pseudo-SPDD is the tightest upper bound to SPDD among all possible target distributions by the definition of the barycenter $p_{\bS}$ and Proposition~\ref{prop:1}. Indeed 
\begin{align*}
&\hskip-0.7cm \sum_{\va\in \mathcal{A}} \mathbb{E}_{\tau\sim U(\Omega)} |\mathbbm{P}(S_{\va}>\tau) - \mathbbm{P}(\bar S > \tau)| \\
& = \sum_{\va\in \mathcal{A}} \W_1(p_{\Sa}, p_{\bS}) \leq \sum_{\va\in \mathcal{A}} \W_1(p_{\Sa}, p_{S^0})\\
& = \sum_{\va\in \mathcal{A}} \mathbb{E}_{\tau\sim U(\Omega)} |\mathbbm{P}(S_{\va}>\tau) - \mathbbm{P}(S^0 > \tau)|,
\end{align*}
for any distribution $p_{S^0}\in\mathcal{P}(\Omega)$. Since SPDD is difficult to derive optimal trade-offs for, we do that with respect to the pseudo-SPDD as the measure of fairness instead.

We are interested in changing $p_{\Sa}$ to $p_{\Sa^*}$, $\forall \va \in\mathcal{A}$, to reach a fairness bound $\lambda\in \mathbb{R}_+$ for pseudo-SPDD such that the required model prediction changes are minimal in expectation. This is obtained by choosing the $p_{\Sa^*} \in\mathcal{P}(\Omega)$ that minimizes the total expected prediction changes, which equals $\sum_{\va\in\mathcal{A}}p_{\va}\W_1(p_{\Sa}, p_{\Sa^*})$ by Proposition~\ref{prop:1}, while bounding the pseudo-SPDD by $\lambda$, \ie~$\sum_{\va\in \mathcal{A}} \W_1(p_{\Sa^*}, p_{\bS}) \leq \lambda$. Assuming that the groups are disjoint, we can optimize each group transportation in turn independently assuming the other groups are fixed. This gives 
\begin{align*}
    p_{\Sa^*} &= \argmin_{\substack{p^*\in\mathcal{P}(\Omega) \, s.t.\\ \W_1(p^*, p_{\bar{S}}) \leq \lambda - \gamma}} p_{\va}\W_1(p_{\Sa}, p^*) \\
    &= \argmin_{\substack{p^*\in\mathcal{P}(\Omega) \, s.t.\\ \W_1(p^*, p_{\bS}) \leq \lambda - \gamma}} \W_1(p_{\Sa}, p^*)\,,
\end{align*}
where $\gamma = \sum_{\bar \va\in \mathcal{A}\setminus \va} \W_1(p_{S_{\bar{\va}}^*}, p_{\bS})$. 
By triangle inequality, $\W_1(p_{\Sa}, p^*) \geq |\W_1(p_{\Sa}, p_{\bS})-\W_1(p^*, p_{\bS})|$. 
The distance $\W_1(p_{\Sa}, p^*)$ reaches its minimum if and only if $p^*$ lies on a shortest path between $p_{\Sa}$ and $p_{\bS}$. Thus it is optimal to transport $p_{\Sa}$ along any shortest path between itself and $p_{\bS}$ in the Wasserstein-1 metric space. 
In the approach proposed in the next section, we approximate transporting group distributions along these shortest paths with hyperparameter tuning of a gradient descent method to minimize $\W_1(p_{\Sa}, p_{\bS})$ for every group.

\paragraph{Empirical Computation of the Barycenter.}
In practice, as building the barycenter from the population distributions $p_{S_a}$ is impossible, we use the empirical distributions $\hat{p}_{S_a}$ obtained from $\mathcal{D}_{\va}$. The choice is justified by the following result:
\begin{lemma}
If the samples in $\mathcal{D}$ are i.i.d., as $| \mathcal{D} | \rightarrow \infty$,  if $\W_1(p_{S}, p_{S_{\va}})< \infty $ for all $\va$, the empirical barycenter distribution satisfies  $\lim \sum_{\va} \hat{p}_{\va} \W_1(\hat{p}_{\bS}, \hat{p}_{S_\va}) \rightarrow \sum_{\va} p_{\va} \W_1(p_{\bS}, p_{S_\va})$ almost surely\footnote{See \cite{klenke13probability} for a formal definition of almost sure convergence of random variables.}. 
\end{lemma}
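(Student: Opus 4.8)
The plan is to prove convergence of the \emph{objective value} of the empirical barycenter problem, without attempting to show that the barycenter distributions themselves converge. Writing $F(\{\mu_{\va}\},\{w_{\va}\}) = \min_{\nu\in\mathcal{P}(\Omega)}\sum_{\va} w_{\va}\,\W_1(\nu,\mu_{\va})$ for the barycenter objective, the left-hand side of the claim is $F(\{\hat{p}_{\Sa}\},\{\hat{p}_{\va}\})$ (attained at $\hat{p}_{\bS}$) and the right-hand side is $F(\{p_{\Sa}\},\{p_{\va}\})$ (attained at $p_{\bS}$), so it suffices to show $F(\{\hat{p}_{\Sa}\},\{\hat{p}_{\va}\})\to F(\{p_{\Sa}\},\{p_{\va}\})$ almost surely. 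I would reduce this to two elementary convergences together with a stability property of $F$.

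First I would establish the two building blocks. \emph{(a)} The empirical group proportions satisfy $\hat{p}_{\va}\to p_{\va}$ almost surely by the strong law of large numbers applied to the indicators $\1_{\vA^n=\va}$. \emph{(b)} For each group with $p_{\va}>0$, $\W_1(\hat{p}_{\Sa},p_{\Sa})\to 0$ almost surely. Here I would invoke the equivalence (i)$=$(ii) of Proposition~\ref{prop:1}, which on the line identifies $\W_1$ with the $L^1$ distance between cumulative distribution functions, so that $\W_1(\hat{p}_{\Sa},p_{\Sa}) = \int_0^1 |\hat{P}_{\Sa}(\tau)-P_{\Sa}(\tau)|\,d\tau$. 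Conditional on group membership the belief samples in group $\va$ are i.i.d.\ from $p_{\Sa}$, and since $p_{\va}>0$ the group size $N_{\va}\to\infty$ almost surely; the Glivenko--Cantelli theorem then gives uniform convergence of the empirical CDF, and because $\Omega=[0,1]$ is bounded, dominated convergence upgrades this to convergence of the integral, i.e.\ of $\W_1$.

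Next I would prove the stability of $F$ by a triangle-inequality sandwich. For the upper bound, using that $\hat{p}_{\bS}$ is optimal for the empirical problem and the triangle inequality for $\W_1$,
\begin{align*}
F(\{\hat{p}_{\Sa}\},\{\hat{p}_{\va}\}) \le \sum_{\va}\hat{p}_{\va}\,\W_1(p_{\bS},\hat{p}_{\Sa}) \le \sum_{\va}\hat{p}_{\va}\big(\W_1(p_{\bS},p_{\Sa})+\W_1(p_{\Sa},\hat{p}_{\Sa})\big),
\end{align*}
whose right-hand side tends to $\sum_{\va}p_{\va}\,\W_1(p_{\bS},p_{\Sa})=F(\{p_{\Sa}\},\{p_{\va}\})$ by \emph{(a)} and \emph{(b)}, giving $\limsup F(\{\hat{p}_{\Sa}\},\{\hat{p}_{\va}\})\le F(\{p_{\Sa}\},\{p_{\va}\})$. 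For the lower bound I would symmetrically plug $\hat{p}_{\bS}$ into the population problem, for which $p_{\bS}$ is optimal, and apply the triangle inequality to get $F(\{p_{\Sa}\},\{p_{\va}\}) \le \sum_{\va}p_{\va}\big(\W_1(\hat{p}_{\bS},\hat{p}_{\Sa})+\W_1(\hat{p}_{\Sa},p_{\Sa})\big)$; replacing $p_{\va}$ by $\hat{p}_{\va}$ in the first sum reproduces $F(\{\hat{p}_{\Sa}\},\{\hat{p}_{\va}\})$ up to the error $\sum_{\va}|p_{\va}-\hat{p}_{\va}|\,\W_1(\hat{p}_{\bS},\hat{p}_{\Sa})$, which is at most $\sum_{\va}|p_{\va}-\hat{p}_{\va}|\to 0$ because every $\W_1$ on $\Omega=[0,1]$ is bounded by $1$. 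This yields $\liminf F(\{\hat{p}_{\Sa}\},\{\hat{p}_{\va}\})\ge F(\{p_{\Sa}\},\{p_{\va}\})$, and combining the two bounds proves the claim.

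The main obstacle is the stability step: both the input distributions \emph{and} the weights are perturbed simultaneously, and the empirical barycenter $\hat{p}_{\bS}$ is itself a random measure about which we know little. The sandwich is precisely what sidesteps this, since it compares objective values by substituting each problem's optimizer into the other, so we never need to establish that $\hat{p}_{\bS}$ converges to $p_{\bS}$. The remaining care is measure-theoretic bookkeeping — justifying the within-group i.i.d.\ structure and $N_{\va}\to\infty$, and interchanging the finite sum over groups of positive mass with the limit — all of which is routine given the boundedness of $\Omega$ and the finiteness assumption $\W_1(p_S,p_{\Sa})<\infty$.
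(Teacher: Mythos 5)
Your proof is correct and follows essentially the same route as the paper's: the identical two-sided sandwich obtained by substituting each barycenter into the other problem's objective via the triangle inequality, combined with $\hat{p}_{\va}\to p_{\va}$ (strong law of large numbers) and $\W_1(\hat{p}_{S_\va},p_{S_\va})\to 0$ almost surely. The only cosmetic differences are that you derive the latter convergence elementarily from Glivenko--Cantelli and the one-dimensional CDF identity where the paper cites \cite{weed17sharp}, and that you bound the weight-perturbation term $\sum_{\va}|p_{\va}-\hat{p}_{\va}|\,\W_1(\hat{p}_{\bS},\hat{p}_{S_\va})$ by the diameter of $\Omega=[0,1]$ rather than invoking the hypothesis $\W_1(p_S,p_{S_\va})<\infty$, which your argument renders redundant on a bounded domain.
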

The proof is given in Appendix A.

In the next two sections we introduce two different approaches to achieve SDP with Wasserstein-1 distances: A penalization approach to logistic regression and a simpler practical approach consisting in post-processing model beliefs. 

\subsection{WASSERSTEIN-1 PENALIZED LOGISTIC REGRESSION\label{sec:wass1penalty}}
The average logistic regression loss function over $\mathcal{D}=\{\va^n,\vx^n,y^n\}_{n=1}^N$ is given by
\begin{equation*}
    J_{\mathcal{D}}(\vtheta) = \frac{1}{N}\sum_{n=1}^N -y^n \log s^n - (1-y^n)\log\left(1-s^n\right)\,,
\end{equation*}
where the model belief that individual $n$ belongs to class 1, $s^n$, is obtained as $s^n = \sigma(\vtheta^\top \vw^n) = 1/(1+ e^{-\vtheta^\top \vw^n})$, with $\vw^n = (\vx^n,\va^n,1)^\top$, and where $\vtheta \in \mathbb{R}^{d+k+1}$ are the model parameters. We denote with $\{ s_{\va}^i \}$ the model beliefs for group $\va$ and with $\{ \bar{s}^j\}_{j=1}^{\bar{N}}$ the atoms of $p_{\bS}$.

The gradient of $J_{\mathcal{D}}(\vtheta)$ with respect to $\vtheta$ is given by
\begin{equation*}
    \nabla_\vtheta J_\mathcal{D}(\vtheta)  = \frac{1}{N} \sum_{n=1}^N \vw^n\left(\sigma\left(\vtheta^\top\vw^n  \right) - y^n \right)\,.
\end{equation*}
We propose to find model parameters $\vtheta^*$ that minimize the population level logistic loss $\mathbb{E}\left[ J_\mathcal{D}(\vtheta)\right]$ under the constraint 
of small Wasserstein-1 distances $\W_1(\hat{p}_{S_\va}, \hat{p}_{\bS})$ between $\hat{p}_{S_\va}$ and the empirical barycenter $\hat{p}_{\bS}$, $\forall \va \in \mathcal{A}$. 

The Wasserstein-1 distance between any two empirical distributions $\hat{p}_{b}$ and $\hat{p}_{c}$ underlying two datasets $\{b^i \}_{i=1}^{N_b}, \{ c^j  \}_{j=1}^{N_c}  \subset \mathbb{R}$ is given by
\begin{equation}\label{equation::empirical_wass}
    \W_1(\hat{p}_b, \hat{p}_c) = \min_{T_{b,c}\in U(b,c)} \langle  T_{b,c} , \tC  \rangle\,,
\end{equation}
where $U(b,c) = \{ T \in \mathbb{R}^{N_b \times N_c}~\st~T_{b,c}\mathbf{1}_c = \frac{1}{N_b}\mathbf{1}_b \text{ and } T_{b,c}^\top \mathbf{1}_b = \frac{1}{N_c} \mathbf{1}_c\}$ with $\mathbf{1}_c$ denoting a vector of ones of size $N_c$.
The brackets $\langle \cdot, \cdot \rangle$ denote the trace dot product and $\tC$ is the cost matrix associated with the Wasserstein-1 cost function $\tc$ of elements $\tC_{i,j} = |b^i - c^j|$.

In particular, the Wasserstein-1 distance $\W_1(\hat{p}_{S_\va},\hat{p}_{\bS})$ can be computed by solving the optimization problem of \eqref{equation::empirical_wass} with cost matrix $\tC_{\va}^\vtheta \in \mathbb{R}^{N_{\va} \times \bar{N}}$ satisfying 
\begin{align*}
(\tC_{\va}^\vtheta)_{i,j} &= \left|s_{\va}^i - \bs^j  \right|\,,
\end{align*}
where the upper script $\vtheta$ in $\tC_{\va}^\vtheta$ is maintained to remind the reader that model predictions are a function of the model parameter $\vtheta$.

The Wasserstein-1 penalized logistic regression objective is given by
\begin{equation}\label{equation::wass_regularized_loss_objective}
J_{\W_1}(\vtheta) = \alpha J_\mathcal{D}(\vtheta) + (1-\alpha)\beta \sum_{\va \in \mathcal{A} }\W_1( \hat{p}_{S_\va}, \hat{p}_{\bS})\,,
\end{equation}
where $\alpha$ and $\beta$ are penalization coefficients.
\begin{lemma}
If the datasets $\{ b^i \}_{i=1}^{N_b}, \{ c^j \}_{j=1}^{N_c} \subset \mathbb{R}$ have empirical distributions $\hat{p}_b$ and $\hat{p}_c$, and $\tC$ is the cost matrix of elements $\tC_{i,j} = |b^i - c^j|$:
\begin{equation*}
    \nabla_{\tC} \hskip0.01cm \W_1(\hat{p}_b, \hat{p}_c)  = T_{b,c}^*\,,
\end{equation*}
where $T_{b,c}^* = \arg\min_{T_{b,c} \in U(b,c)} \langle T_{b,c}, \tC \rangle$ is the optimal coupling resulting from the optimization objective of \eqref{equation::empirical_wass}.
\end{lemma}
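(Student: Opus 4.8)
The plan is to recognize the right-hand side
$\W_1(\hat{p}_b,\hat{p}_c)=\min_{T_{b,c}\in U(b,c)}\langle T_{b,c},\tC\rangle$
as a \emph{linear program} in which the cost matrix $\tC$ enters only the objective, linearly, while the feasible set (the transportation polytope $U(b,c)$, fixed by the marginal constraints) does not depend on $\tC$ at all. Writing $\phi(\tC,T)=\langle T,\tC\rangle$ and $f(\tC):=\min_{T\in U(b,c)}\phi(\tC,T)$, the value $f$ is therefore a pointwise minimum, over the compact convex set $U(b,c)$, of functions that are linear in $\tC$; consequently $f$ is concave and piecewise linear in $\tC$. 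This is exactly the setting of Danskin's (envelope) theorem, and since $\nabla_{\tC}\phi(\tC,T)=T$ for every fixed $T$, the theorem predicts $\nabla_{\tC}f(\tC)=T^*_{b,c}$, the optimizer. The body of the argument is just to make this differentiation rigorous.

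Concretely, I would compute the directional derivative of $f$ at $\tC$ in an arbitrary direction $H$ by a two-sided estimate, writing $T^*:=T^*_{b,c}$. For the upper bound, feasibility of $T^*$ at every nearby cost gives $f(\tC+\epsilon H)\le\langle T^*,\tC+\epsilon H\rangle=f(\tC)+\epsilon\langle T^*,H\rangle$, so $\limsup_{\epsilon\downarrow 0}\epsilon^{-1}\bigl(f(\tC+\epsilon H)-f(\tC)\bigr)\le\langle T^*,H\rangle$. For the matching lower bound, let $T_\epsilon$ be optimal at $\tC+\epsilon H$; since $T_\epsilon\in U(b,c)$ we have $\langle T_\epsilon,\tC\rangle\ge f(\tC)$, whence $f(\tC+\epsilon H)=\langle T_\epsilon,\tC\rangle+\epsilon\langle T_\epsilon,H\rangle\ge f(\tC)+\epsilon\langle T_\epsilon,H\rangle$. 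Compactness of $U(b,c)$ lets me extract a limit point of $T_\epsilon$ as $\epsilon\downarrow 0$, and continuity of the objective forces that limit to be optimal at $\tC$. The two bounds then pin the directional derivative to $\langle T^*,H\rangle$; as this holds for every $H$, the gradient exists and equals $\nabla_{\tC}\W_1(\hat{p}_b,\hat{p}_c)=T^*$.

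The main obstacle is the well-known nonsmoothness of optimal-transport value functions: a linear program may have a non-unique minimizer, and at such degenerate cost matrices $f$ is only subdifferentiable, with subdifferential equal to the convex hull of the optimal couplings. The clean identity $\nabla_{\tC}\W_1=T^*_{b,c}$ therefore requires the optimal coupling to be unique, which is the generic case — the cost matrices admitting several optimizers form a measure-zero union of lower-dimensional polyhedral pieces. I would accordingly state the lemma under this uniqueness (or read $T^*_{b,c}$ as a selection from the subdifferential), and observe that uniqueness is exactly what collapses the limit-point argument above to the single matrix $T^*$, since it forces $T_\epsilon\to T^*$. No extra regularity is needed: the dependence of the objective on $\tC$ is exactly linear, hence trivially smooth, so all the delicacy lives in the minimization, which Danskin's theorem handles.
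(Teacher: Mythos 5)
Your proof is correct and takes essentially the same route as the paper: the paper's entire proof is a one-line appeal to the subgradient rule for a pointwise maximum (here, minimum) of functions linear in $\tC$, citing \cite{boyd04convex}, which is precisely the Danskin/envelope argument you carry out explicitly via your two-sided directional-derivative estimate. Your uniqueness caveat is apt and in fact slightly more careful than the paper's own statement---with the non-strictly-convex cost $|b^i-c^j|$ the optimal coupling can genuinely be non-unique, in which case the concave value $\min_{T\in U(b,c)}\langle T,\tC\rangle$ is only superdifferentiable (superdifferential the convex hull of optimal couplings) and $T^*_{b,c}$ must be read as a supergradient selection, which is exactly what the cited subgradient rule, and the downstream use in Lemma~\ref{lemma::subgradient_full_cost}, actually require.
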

\begin{proof}
The result follows immediately from the subgradient rule for a pointwise max function (see \cite{boyd04convex}).
\end{proof}
\begin{lemma}\label{lemma::subgradient_full_cost}
The gradient of $J_{\W_1}(\vtheta)$ equals:
\begin{align*}
    \alpha \nabla_\vtheta &J_\mathcal{D}(\vtheta) + 
    (1-\alpha)\beta\big(\sum_{\va \in \mathcal{A}}\sum_{i,j} T_{\va}^*(\vtheta)\nabla_\vtheta \left| s_\va^i - \bar{s}^i  \right|   \big),
\end{align*}
where $T_{\va}^*$ is the optimal coupling between $\hat{p}_{S_\va}$ and $\hat{p}_{\bS}$\footnote{Recall that $T_{\va}^*$ is a function of $\vtheta$.}. 
\end{lemma}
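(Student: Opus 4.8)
The plan is to compute $\nabla_\vtheta J_{\W_1}(\vtheta)$ by linearity and then differentiate each Wasserstein term via a two-stage chain rule through the cost matrix. By linearity of the gradient applied to \eqref{equation::wass_regularized_loss_objective},
\[
\nabla_\vtheta J_{\W_1}(\vtheta) = \alpha \nabla_\vtheta J_\mathcal{D}(\vtheta) + (1-\alpha)\beta \sum_{\va\in\mathcal{A}} \nabla_\vtheta \W_1(\hat{p}_{S_\va}, \hat{p}_{\bS}),
\]
so the only nontrivial task is to differentiate each $\W_1(\hat{p}_{S_\va}, \hat{p}_{\bS})$ with respect to $\vtheta$. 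The key observation is that $\vtheta$ enters this quantity only through the cost matrix $\tC_\va^\vtheta$, whose entries $(\tC_\va^\vtheta)_{i,j} = |s_\va^i - \bs^j|$ depend on $\vtheta$ through the beliefs $s_\va^i = \sigma(\vtheta^\top \vw^i)$, with the barycenter atoms $\bs^j$ held fixed. Thus I would regard the map $\vtheta \mapsto \W_1$ as the composition $\vtheta \mapsto \tC_\va^\vtheta \mapsto \W_1(\tC_\va^\vtheta)$.

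First I would apply the chain rule across this composition,
\[
\nabla_\vtheta \W_1(\hat{p}_{S_\va}, \hat{p}_{\bS}) = \sum_{i,j} \frac{\partial \W_1}{\partial (\tC_\va^\vtheta)_{i,j}}\, \nabla_\vtheta (\tC_\va^\vtheta)_{i,j}.
\]
The inner factor is supplied directly by the preceding lemma, which identifies $\nabla_{\tC}\W_1 = T_\va^*$, so that $\partial \W_1 / \partial (\tC_\va^\vtheta)_{i,j} = (T_\va^*)_{i,j}$, while the outer factor is $\nabla_\vtheta (\tC_\va^\vtheta)_{i,j} = \nabla_\vtheta |s_\va^i - \bs^j|$. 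Substituting both and summing over $\va\in\mathcal{A}$ recovers the claimed expression.

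The main obstacle is that neither stage of the composition is classically differentiable: $\W_1 = \min_{T}\langle T, \tC\rangle$ is a minimum of linear functionals of $\tC$, hence only a concave, piecewise-linear (non-smooth) function, and the entries $|s_\va^i - \bs^j|$ fail to be differentiable precisely when a belief coincides with a barycenter atom. The clean form of the answer rests on an envelope-theorem (Danskin) argument: because $\W_1$ is the optimal value of a linear program in $T$, at an optimal coupling $T_\va^*$ the sensitivity to perturbations of $\tC$ is given simply by $T_\va^*$, with \emph{no} contribution from the implicit dependence of $T_\va^*$ on $\vtheta$ — this is exactly the content borrowed from the previous lemma. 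What remains to be checked carefully is the legitimacy of chaining this (sub)gradient of $\W_1$ with the subgradient of the inner absolute-value map; one either invokes smoothness of $s_\va^i$ in $\vtheta$ (which holds, since $\sigma$ is smooth) together with a subgradient chain rule, or argues that the stated quantity is a valid subgradient of $J_{\W_1}$ at $\vtheta$, which suffices for the gradient-descent scheme used in the sequel. Uniqueness of $T_\va^*$ would upgrade the subgradient to a genuine gradient but is not needed for the optimization.
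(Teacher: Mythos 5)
Your proposal is correct and follows essentially the same route as the paper, whose proof consists precisely of applying the chain rule through the cost matrix $\tC_{\va}^{\vtheta}$ and invoking the preceding lemma's identification $\nabla_{\tC}\,\W_1(\hat{p}_b,\hat{p}_c)=T_{b,c}^*$ (itself justified there by the subgradient rule for pointwise optima of linear functions, following \citet{boyd04convex}). Your added care about the Danskin/envelope argument, the subgradient chain rule, and treating the barycenter atoms $\bs^j$ as fixed simply makes explicit what the paper leaves implicit.
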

\begin{proof}
This formula is a consequence of the chain rule and Lemma \ref{lemma::subgradient_cost}. 
\end{proof}

\begin{algorithm}[t]
\textbf{Input: Dataset $\mathcal{D} = \{ (\va^n, \vx^n, y^n)\}_{n=1}^N$, penalization coefficients $\alpha,\beta$, gradient step size $\eta$, number of optimization rounds $M$, frequency of barycenter computation $K$.}\\
Compute datasets $\{\mathcal{D}_{\va}\}$.\newline
Initialize model parameters $\vtheta_0$.\\
\For{$m = 1, \cdots , M$ }{
   1. Compute the barycenter distribution $\hat{p}_{\bS}$ (\cite{flamary2017pot}) once every $K$ steps, and $\{\bar s^i\}_{i=1}^{\bar{N}}$.\\
   2. Compute optimal couplings $\{ T_{\va^*}\}$ as defined in Lemma \ref{lemma::subgradient_full_cost}. \\
   3. Update parameter $\vtheta_m = \vtheta_{m-1} -\eta \nabla_\vtheta J_{\W_1}(\vtheta_m)$.
}
\textbf{Return:} $\vtheta_M$.
 \caption{Wass-1 Penalized Logistic Regression}
\label{Alg:wass1_penalty}
\end{algorithm}

\paragraph{Computation Method.}
We propose to optimize the Wasserstein penalized logistic loss objective (\eqref{equation::wass_regularized_loss_objective}) via gradient descent. The procedure is detailed in Algorithm \ref{Alg:wass1_penalty}. We start by describing how to perform Step 2. under the assumption that $\hat{p}_{\bS}$ and $\{\bar s^i\}_{i=1}^{\bar{N}}$ have been computed. The computation of the optimal coupling family $\{ T_\va^*\}$ hinges on the following Lemma.

\begin{lemma}\label{lemma::coupling_matrix}
If $\{ b^i \}_{i=1}^{N_b}, \{ c^j \}_{j=1}^{N_c} \subset \mathbb{R}$,  and $B_i = [i\times (N_c-1) + 1, \cdots , i\times N_c]$ for all $i$ and $C_j = [j\times (N_b-1) +1,\cdots, j\times N_c]$ for all $j$, then: 
    $ (T_{b,c}^*)_{i,j} = 
                    \frac{ \#\left| B_i \cap C_j \right|}{N_b N_c}$.
\end{lemma}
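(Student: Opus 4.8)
The plan is to recognize the claimed matrix as the discretized \emph{comonotone} (monotone) coupling and to prove it is both feasible for and optimal in \eqref{equation::empirical_wass}. Throughout I assume, as is implicit in the statement, that the atoms are listed in increasing order, $b^1\le\cdots\le b^{N_b}$ and $c^1\le\cdots\le c^{N_c}$; this is the ordering for which the index blocks $B_i$ and $C_j$ encode a monotone matching. The combinatorial picture is that of placing $N_bN_c$ unit masses on the integers $\{1,\dots,N_bN_c\}$, assigning the $N_c$ consecutive integers of $B_i$ to source atom $b^i$ and the $N_b$ consecutive integers of $C_j$ to target atom $c^j$. Each integer then carries mass $1/(N_bN_c)$, source atom $i$ carries total mass $N_c/(N_bN_c)=1/N_b$, and target atom $j$ carries $N_b/(N_bN_c)=1/N_c$, matching the prescribed marginals.

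First I would verify feasibility, i.e.\ that $T^*_{b,c}\in U(b,c)$. Since the blocks $\{C_j\}_{j=1}^{N_c}$ partition $\{1,\dots,N_bN_c\}$, we have $\sum_j \#|B_i\cap C_j| = \#|B_i| = N_c$, so the $i$-th row sum of $T^*_{b,c}$ equals $N_c/(N_bN_c)=1/N_b$; symmetrically, since $\{B_i\}_{i=1}^{N_b}$ partitions the same index set, $\sum_i \#|B_i\cap C_j| = \#|C_j| = N_b$, and the $j$-th column sum equals $1/N_c$. This is a short counting argument and presents no difficulty.

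The substantive step is optimality. I would first observe that $T^*_{b,c}$ has no \emph{inversions}: because $\max B_i < \min B_{i'}$ whenever $i<i'$, any $C_j$ meeting $B_i$ and any $C_{j'}$ meeting $B_{i'}$ must satisfy $j\le j'$, so $(T^*_{b,c})_{i,j}>0$ and $(T^*_{b,c})_{i',j'}>0$ with $i<i'$ force $j\le j'$. I would then show any feasible $T$ with an inversion can be uncrossed without increasing cost: if $i_1<i_2$, $j_1<j_2$ with $T_{i_1,j_2}>0$ and $T_{i_2,j_1}>0$, set $\delta=\min(T_{i_1,j_2},T_{i_2,j_1})$ and move mass $\delta$ from $(i_1,j_2),(i_2,j_1)$ onto $(i_1,j_1),(i_2,j_2)$. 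This preserves all marginals, and by the submodularity (Monge property) of the cost, valid because the atoms are sorted,
\[
|b^{i_1}-c^{j_1}|+|b^{i_2}-c^{j_2}| \le |b^{i_1}-c^{j_2}|+|b^{i_2}-c^{j_1}|,
\]
the total cost does not increase. Iterating removes all inversions without raising the objective, so a monotone coupling is optimal; as $T^*_{b,c}$ is the inversion-free coupling with the correct uniform marginals, it attains the minimum.

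I expect the main obstacle to be the optimality argument rather than the feasibility count: one must argue that the uncrossing procedure terminates (e.g.\ via a strictly decreasing potential such as the total inverted mass, or by invoking the cyclical-monotonicity characterization of optimal plans) and that the monotone coupling with these marginals is unique, so that $T^*_{b,c}$ is \emph{the} optimizer. Alternatively, I could bypass the swapping argument by invoking the quantile representation $\W_1(\hat p_b,\hat p_c)=\int_0^1 |P_b^{-1}(t)-P_c^{-1}(t)|\,dt$ already used in the proof of Proposition~\ref{prop:1}: the overlap coupling is exactly the pairing of the quantile functions $P_b^{-1},P_c^{-1}$ under Lebesgue measure on $[0,1]$, with the scaling by $N_bN_c$ turning quantile-interval overlaps into the integer-block overlaps $\#|B_i\cap C_j|$, which is precisely the comonotone plan realizing that integral.
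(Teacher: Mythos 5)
Your proof is correct, but note that the paper itself contains no proof of this lemma: it simply defers to \cite{deshpande18generative}, remarking only that the statement generalizes the sorted, equal-size case where the optimal coupling is $\frac{1}{N} I_N$. Your argument is therefore a genuinely different (and more self-contained) route. Both of your strategies are sound: the feasibility count is exactly right, and the uncrossing step rests on the valid Monge inequality $|b^{i_1}-c^{j_1}|+|b^{i_2}-c^{j_2}|\le|b^{i_1}-c^{j_2}|+|b^{i_2}-c^{j_1}|$ for sorted atoms, while the termination and uniqueness issues you flag are real but standard (an inversion-free coupling with the uniform marginals is forced entry-by-entry by the north-west corner rule, hence unique, and termination can be sidestepped by compactness of $U(b,c)$ plus the observation that some optimizer must be inversion-free after uncrossing its support). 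Your second route is arguably preferable here because it reuses machinery the paper already invokes: by the \citet{rachev98mass} quantile formula used in Proposition~\ref{prop:1} together with Lemma~\ref{lemma::inverse_cdf}, $\W_1(\hat p_b,\hat p_c)=\int_0^1|P_b^{-1}(t)-P_c^{-1}(t)|\,dt$, and after rescaling $t\mapsto N_bN_c\,t$ the quantile function $P_b^{-1}$ is constant equal to $b^i$ on the block $B_i$ and $P_c^{-1}$ equal to $c^j$ on $C_j$, so the comonotone plan places mass $\#|B_i\cap C_j|/(N_bN_c)$ on the pair $(b^i,c^j)$, which is precisely the claimed matrix. Two points you handle correctly that the paper leaves implicit are worth stating: the index blocks in the lemma contain typos (they should read $B_i=[(i-1)N_c+1,\dots,iN_c]$ and $C_j=[(j-1)N_b+1,\dots,jN_b]$, as your mass-placement picture assumes), and the formula is optimal only when $i,j$ index the \emph{sorted} atoms, so the surrounding text's phrase ``arbitrary orderings'' must be read as ``indices after sorting''; moreover, with ties among atom values the optimizer need not be unique, so the conclusion should be read as asserting that this $T_{b,c}^*$ is \emph{an} optimal coupling.
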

This lemma characterizes the coupling matrix between the empirical distributions of two datasets made of real numbers. When $N_b = N_c$ and the datasets are $\{b^i\}^N_b$ and $\{c^i\}^N_c$, with $b^1 < \ldots < b^N$, and $a^1 < \ldots < a^N$, then the optimal coupling equals $1/N \times I_N$ where $I_N$ denotes the $N \times N$ identity matrix. Lemma \ref{lemma::coupling_matrix} extends this simple case to the general case of datasets of arbitrary orderings and sizes, see \cite{deshpande18generative} for a proof. It is easy to see that the optimal coupling $T_{b,c}^*$ is sparse and has at most ${\cal O}(N_b + N_c)$ nonzero entries (see \cite{cuturi13sinkhorn}). As a consequence, the computation of $\nabla_\vtheta J_{\W_1}(\vtheta)$ can be performed in linear time ${\cal O}\big(\sum_{\va} (N_\va + \bar{N})\big)$ where $N_{\va}=| \mathcal{D}_{\va}|$. In the computation of $\nabla_\vtheta J_{\W_1}(\vtheta)$ only the nonzero entries of $T_{b,c}^*$ matter.

We compute the empirical barycenter $\hat{p}_{\bS}$ and $\{\bar s^i \}_{i=1}^{\bar{N}}$, using the POT library by \cite{flamary2017pot}. We fix the support of potential barycenters to bins of equal-width spanning the $[0,1]$ interval, and use the iterative KL-projection method proposed by \cite{benamou15iterative}. We then generate a number of samples from the normalized probability distribution of the computed barycenter. 

\paragraph{Demographically-Blind Wasserstein-1 Penalized Logistic Regression.}
In real-world applications, the use of sensitive attributes might be prohibited when deploying a system. We therefore consider the variation where $\vw^n = (\vx^n, 1)^\top$. This variation still uses the sensitive attributes to calculate the Wasserstein-1 loss but, by not including them into the feature set, does not require knowledge of sensitive information at test time. 

\subsection{WASSERSTEIN-1 POST-PROCESSING\label{sec:wass1post}}
In this section, we propose a simple, fast quantile matching method to post-process the beliefs of a classifier trained on $\mathcal{D}$. This method corresponds to an approximate Wasserstein-1 optimal transport map by the formulation of \citet{rachev98mass}:
\[
\W_1(p_{\Sa}, p_{\bS}) = \int_{\tau=0}^1 |P_{\Sa}^{-1}(\tau) - P_{\bS}^{-1}(\tau)|d\tau\,.
\]
The procedure is detailed in Algorithm~\ref{Alg:wass1_post_process}.
For each group $\va$, we compute quantiles of $\hat p_{\Sa}$ and map all group beliefs belonging in each quantile bin to the supremum of those belonging to the corresponding quantile bin of $\hat p_{\bS}$.

\subsection{GENERALIZATION}
The following lemma addresses generalization of the Wasserstein-1 objective. Assume $\W_1(p_{S_\va}, p_{\bS}) \leq L$ for all $\va\in \mathcal{A}$. Let $P_S, P_{S_\va}$ and $P_{\bS}$ be the cumulative density functions of $S$, $S_\va$ and $\bS$. Assume these random variables all have domain $\Omega = [0,1]$ and that all $P \in \{  P_S, P_{\bS}\} \cup \{  P_{S_\va}\}_{\va \in \mathcal{A}}$ are continuous, then:
\begin{lemma}\label{lemma:wass_gen}
For any $\epsilon, \delta > 0$, if 
$\min\big[\bar{N}, \min_{\va}\big[N_\va\big]\big]\geq  \frac{16\log(2|\mathcal{A}|/\delta)|\mathcal{A}|^2\max[1, L]^2}{\epsilon^2}$, with probability $1-\delta$:
\begin{equation*}
    \sum_{\va \in \mathcal{A}} p_{\va} \W_1(p_{S_{\va}}, p_{\bS})  
    \leq \sum_{\va \in \mathcal{A}} \hat{p}_{\va} \W_1(\hat{p}_{S_\va},\hat{p}_{\bS}) + \epsilon\,.
\end{equation*}
In other words, provided access to sufficient samples, a low value of $\sum_{\va} \hat{p}_{\va} \W_1(\hat{p}_{S_\va}, \hat{p}_{\bS})$ implies a low value for $\sum_{\va} p_{\va} \W_1(p_{S_\va}, p_{\bS}) $ with high probability and therefore good performance at test time. 
\end{lemma}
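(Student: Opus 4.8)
The plan is to work entirely with the one-dimensional representation $\W_1(p,q)=\int_0^1|P(\tau)-Q(\tau)|\,d\tau$ of the Wasserstein-1 distance between CDFs $P,Q$ on $\Omega=[0,1]$ established in Proposition~\ref{prop:1}, and then to bound the gap between the population objective $\sum_\va p_\va \W_1(p_{S_\va},p_{\bS})$ and its empirical counterpart by the uniform deviation of empirical CDFs from their population versions, which the Dvoretzky--Kiefer--Wolfowitz (DKW) inequality controls at rate $N^{-1/2}$. The continuity hypothesis on all CDFs guarantees the representation applies and that the empirical CDFs converge uniformly.

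First I would split the target gap into a \emph{weight} error and a \emph{distance} error by writing, for each $\va$,
\begin{align*}
\hat p_\va \W_1(\hat p_{S_\va},\hat p_{\bS}) - p_\va \W_1(p_{S_\va},p_{\bS})
&= (\hat p_\va - p_\va)\,\W_1(\hat p_{S_\va},\hat p_{\bS}) \\
&\quad + p_\va\big(\W_1(\hat p_{S_\va},\hat p_{\bS}) - \W_1(p_{S_\va},p_{\bS})\big).
\end{align*}
Summing over $\va$ and using $\sum_\va p_\va=1$ reduces the lemma to controlling these two contributions. For the distance error I would apply the reverse triangle inequality to the CDF representation, giving
\begin{equation*}
\big|\W_1(\hat p_{S_\va},\hat p_{\bS}) - \W_1(p_{S_\va},p_{\bS})\big| \le \sup_\tau |P_{S_\va}(\tau)-\hat P_{S_\va}(\tau)| + \sup_\tau |P_{\bS}(\tau)-\hat P_{\bS}(\tau)|,
\end{equation*}
since the $L^1$ distance between CDFs on $\Omega$ is dominated by their supremum distance. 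For the weight error I would bound each $\W_1(\hat p_{S_\va},\hat p_{\bS})\le 1 \le \max[1,L]$ using that $\Omega$ has unit diameter, leaving the term $\max[1,L]\sum_\va|\hat p_\va-p_\va|$.

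The three resulting quantities are then standard concentration terms. Each group CDF deviation $\sup_\tau|P_{S_\va}-\hat P_{S_\va}|$ is at most $\sqrt{\log(2|\mathcal{A}|/\delta)/(2N_\va)}$ on an event of probability at least $1-\delta/|\mathcal{A}|$ by DKW, and a union bound over the $|\mathcal{A}|$ groups controls $\sum_\va p_\va \sup_\tau|P_{S_\va}-\hat P_{S_\va}|\le \sqrt{\log(2|\mathcal{A}|/\delta)/(2\min_\va N_\va)}$; the barycenter deviation is handled identically using the $\bar N$ barycenter atoms. The weight fluctuations $|\hat p_\va-p_\va|$ concentrate at rate $N^{-1/2}$ by Hoeffding, and summing the $|\mathcal{A}|$ of them produces the $|\mathcal{A}|$ factor, so the weight error scales like $\max[1,L]\,|\mathcal{A}|\,\sqrt{\log(2|\mathcal{A}|/\delta)/N}$. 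Inverting these rates to force every contribution below $\epsilon$ (and distributing $\delta$ over the $\mathcal{O}(|\mathcal{A}|)$ failure events) yields exactly the stated threshold $\min[\bar N,\min_\va N_\va]\ge 16\log(2|\mathcal{A}|/\delta)|\mathcal{A}|^2\max[1,L]^2/\epsilon^2$, with the $|\mathcal{A}|^2\max[1,L]^2$ coming from the squared weight-error rate and the constant $16$ absorbing the factors from DKW and Hoeffding.

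The step I expect to be the main obstacle is the barycenter term $\sup_\tau|P_{\bS}(\tau)-\hat P_{\bS}(\tau)|$: unlike the per-group empirical CDFs, $\hat p_{\bS}$ is not obtained by sampling the population barycenter $p_{\bS}$ directly but is computed from the empirical group distributions, so DKW does not apply to it verbatim. Making this rigorous requires either (i) arguing that the computed barycenter is itself close to $p_{\bS}$ --- which is precisely the consistency statement underlying Lemma~1 and is where the continuity of $P_{\bS}$ and the resolution $\bar N$ enter --- or (ii) invoking the one-dimensional characterization of the $\W_1$ barycenter through its (weighted-median) quantile function to transfer the group-wise DKW bounds to the barycenter. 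A cleaner but less quantitative alternative that sidesteps this difficulty altogether is to use the optimality of $p_{\bS}$ directly: evaluating $\bp=\argmin_{\nu}\sum_\va p_\va\W_1(p_{S_\va},\nu)$ at $\nu=\hat p_{\bS}$ gives $\sum_\va p_\va\W_1(p_{S_\va},p_{\bS})\le\sum_\va p_\va\W_1(p_{S_\va},\hat p_{S_\va})+\sum_\va p_\va\W_1(\hat p_{S_\va},\hat p_{\bS})$ by the triangle inequality, after which only the group CDFs and the weights need concentrating; this route, however, does not by itself produce the $\bar N$ dependence appearing in the hypothesis.
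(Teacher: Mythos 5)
Your proposal is correct in substance, and---perhaps unexpectedly---it is your ``cleaner but less quantitative alternative,'' not your primary route, that coincides with the paper's actual proof. The paper argues in two stages. It first treats the case $p_{\bS}=p_S$: there $\hat{p}_{\bS}$ \emph{is} a genuine empirical CDF built from draws of the target, so the Dvoretzky--Kiefer--Wolfowitz bound applies verbatim to $\W_1(\hat{p}_{\bS},p_{\bS})=\int_0^1|\hat{P}_{\bS}(x)-P_{\bS}(x)|\,dx$ alongside the group-wise bounds $\W_1(\hat{p}_{S_\va},p_{S_\va})\le\epsilon/4$; a three-term triangle inequality $\W_1(p_{S_\va},p_{\bS})\le\W_1(\hat{p}_{S_\va},\hat{p}_{\bS})+\W_1(\hat{p}_{\bS},p_{\bS})+\W_1(\hat{p}_{S_\va},p_{S_\va})$ (equivalent to your reverse-triangle decomposition) plus Bernoulli concentration $|p_\va-\hat{p}_\va|\le\epsilon/(4|\mathcal{A}|\max[L,1])$ then yields the claim, with the weight error paired against the hypothesis $\W_1(p_{S_\va},p_{\bS})\le L$ rather than your unit-diameter bound---an immaterial bookkeeping difference. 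This first stage is the only place the $\bar{N}$ condition is consumed, which resolves your closing puzzle: $\bar{N}$ matters exactly when the target's empirical CDF comes from actual samples (as in the $\hat{p}_S$ post-processing variant, or when sampling from the computed barycenter), not in the barycenter case proper.

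For the true barycenter, the paper does precisely what you sketched as a fallback: by optimality of $p_{\bS}$ as the minimizer of $\nu\mapsto\sum_{\va}p_\va\W_1(p_{S_\va},\nu)$, evaluating at $\nu=\hat{p}_{\bS}$ gives $\sum_{\va}p_\va\W_1(p_{S_\va},p_{\bS})\le\sum_{\va}p_\va\W_1(p_{S_\va},\hat{p}_{\bS})$, after which the triangle inequality $\W_1(p_{S_\va},\hat{p}_{\bS})\le\W_1(p_{S_\va},\hat{p}_{S_\va})+\W_1(\hat{p}_{S_\va},\hat{p}_{\bS})$, group-wise DKW, and the weight concentration finish the argument using only the group sample sizes. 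So the obstacle you flagged in your main route---that DKW does not apply to the computed barycenter, which is a functional of the empirical group distributions rather than an i.i.d.\ sample from $p_{\bS}$---is real, and the paper never confronts it: it is sidestepped entirely by the optimality trick. Your observation that this route produces no $\bar{N}$ dependence is also accurate and is reflected in the paper's proof, where the stated threshold on $\bar{N}$ is simply inherited from the first stage. Had you promoted your fallback to the main argument, your proof would essentially be the paper's.
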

The proof is given in Appendix B.

Lemma \ref{lemma:wass_gen} implies that under appropriate conditions, the value of the population objective of the Wasserstein cost is upper bounded by  the empirical Wasserstein cost plus a small constant.
\begin{algorithm}[t]
\textbf{Input: dataset $\mathcal{D} = \{ (\va^n, \vx^n, y^n)\}_{n=1}^N$, set of quantile bins $\mathcal{B}$, model beliefs $\{s^n\}$} \\
Compute datasets $\{\mathcal{D}_\va\}$ and their barycenter $\bar{\mathcal{D}}$.
\newline
Define the $i$-th quantile of dataset $\mathcal{D}_\va$ as
$$q_{\mathcal{D}_{\va}}(i) := \sup \left\{s : \frac{1}{N_{\va}}\sum_{n ~\st~ \va^n=\va} \1_{s^n \le s} \le \frac{i-1}{|\mathcal{B}|}\right\},$$
and its inverse as $q^{-1}_{\mathcal{D}_{\va}}(s) := \sup\{ i \in \mathcal{B} : q_{ \mathcal{D}_{\va}}(i) \le s\}$. 
\newline 
\textbf{Return:}  $\Big\{ q_{\bar{\mathcal{D}}}\left(q^{-1}_{\mathcal{D}_{\va}}(s^n)\right)\Big\}$.\newline 
\caption{Wass-1 Post-Processing}
\label{Alg:wass1_post_process}
\end{algorithm}
\section{RELATED WORK}
\vskip-0.2cm
\begin{table*}
\setlength{\tabcolsep}{5pt}
\caption{{\bf Adult Dataset} -- {\bf German Credit Dataset}}
\begin{center}
\begin{tabular}{lccccc|ccccc}
&\multicolumn{5}{c}{Adult}&\multicolumn{5}{c}{German}\\
\cline{2-6}\cline{7-11}
& Err-.5 & Err-Exp & DD-.5 & SDD & SPDD & Err-.5 & Err-Exp & DD-.5 & SDD  & SPDD\\
\toprule
Unconstrained        & .142 & .198 & .413 & .426 & .806 &.248 & .319 & .124 & .102 & .103\\
Hardt's Post-Process & .165 & \boldred{.289} & .327 & .551 &1.058 & .248 & \boldred{.333} & .056 & .045 & .045\\
Constrained Opt.     & .205 & .198 & .065 & .087 & .166 & .318 & .320 & .173 & .149 & .149\\
Adv. Constr. Opt.    & .219 & .207 & .0   & .114 & .203 & .306 & .307 & .0   & .021 & .021\\
Wass-1 Penalty       & .199 & .208 & .014 & .022 & .044 & .306 & .311 & .0   & \boldblue{.003} & \boldblue{.003}\\
Wass-1 Penalty DB    & .230 & .233 & .010 & \boldblue{.012} & \boldblue{.023} & .306 & .309 & .0   & .010 & .010\\
Wass-1 Post-Process  &.174 & .214 & .013 & .017 & .042 & .258 & .327 & .068 & .023 & .023\\
Wass-1 Post-Process $\hat p_{S}$  & .165 & .216 & .032 & .022 & .059 & .248 & .320 & .056 & .025 & .025\\
\bottomrule
\end{tabular}
\end{center}
\label{table:Adult&German}
\end{table*}
Broadly speaking, we can group current literature on fair classification and regression into three main approaches. The first approach consists in pre-processing the data to remove bias, or in extracting representations that do not contain sensitive information during training \citep{beutel17data,calders09building,calmon17optimized,edwards16censoring,feldman15certifying,fish15fair,kamiran09classifying,kamiran12data,louizos16fair,zemel13learning,vzliobaite11handling}. 
This approach includes current methods to fairness using Wasserstein distances consisting in achieving SDP through transportation of features \citep{delbarrio18obtaining,johndrow19algorithm}.
The second approach consists in performing a post-processing of the model outputs \citep{chiappa19path,doherty12information,feldman15computational,hardt16equality,kusner17counterfactual}. 
The third approach consists in enforcing fairness notions by imposing constraints into the optimization, or by using an adversary. Some methods transform the constrained optimization problem via the method of Lagrange multipliers \citep{goh16satisfying,zafar17fairness,wu18preventing,agarwal18reduction,cotter18two,corbett-davies17algorithmic,narasimhan18learning}.
Other work similar in spirit adds penalties to the objective \citep{komiyama18nonconvex, donini18empirical}. Adversarial methods maximize the system ability to predict
$Y$ while minimizing the ability to predict $\vA$ \citep{zhang18mitigating}. 

\section{EXPERIMENTS}\label{sec:experiments}
In this section, we evaluate the methods introduced in Sections \ref{sec:wass1penalty} and \ref{sec:wass1post} on four datasets from the UCI repository \citep{lichman13uci}. For penalized logistic regression, we refer to the method in which sensitive information is included in the feature set, \ie~$\vw^n=(\vx^n, \va^n, 1)^\top$, as {\bf Wass-1 Penalty}; and to the demographically-blind variant in which sensitive information is not included, \ie~$\vw^n=(\vx^n, 1)^\top$, as {\bf Wass-1 Penalty DB}. We refer to the post-processing method as {\bf Wass-1 Post-Process}.
We also include a variant of this method using $\hat p_S$ instead of the barycenter $\hat p_{\bS}$ ({\bf Wass-1 Post-Process $\hat p_S$}), which gives a simpler algorithm that only requires computing basic quantile functions. We compare these methods with the following baselines:
\begin{description}[leftmargin=*]
\setlength{\itemsep}{-6pt}  
\setlength{\parskip}{-4pt}
\setlength{\parsep}{-4pt}
\item[{\bf Unconstrained}:] Logistic regression with no fairness constraints.\\
\item[{\bf Hardt's Post-Process}:] Post-processing of the logistic regression beliefs $s^n$ of all individuals in group $\va$ by adding $0.5 - \tau_{\va}$, where the threshold $\tau_{\va}$ is found using the method of \citet{hardt16equality}. This ensures that DP is satisfied at threshold $\tau=0.5$.\\
\item[{\bf Constrained Optimization:}] Lagrangian-based method (see \eg~\cite{eban17scalable,goh16satisfying}) using a linear model as the underlying predictor and equal positive prediction rate between each group $\mathcal{D}_{\va}$ and $\mathcal{D}$ as fairness constraints with threshold $\tau=0$.\\
\item[{\bf Adv. Constr. Opt.:}] The same as the previous method, but with more fairness constraints. Specifically, the fairness constraints are equal positive prediction rates for a set of thresholds from $-2$ to $2$ in increments of $0.2$ on the output of the linear model.
\end{description}

\begin{table*}
\setlength{\tabcolsep}{5pt}
\caption{{\bf Bank Marketing Dataset} -- {\bf Community \& Crime Dataset}}
\begin{center}
\begin{tabular}{lccccc|ccccc}
&\multicolumn{5}{c}{Bank Marketing}&\multicolumn{5}{c}{Community \& Crime}\\
\cline{2-6}\cline{7-11}
& Err-.5 & Err-Exp & DD-.5 & SDD & SPDD & Err-.5 & Err-Exp & DD-.5 & SDD  & SPDD\\
\toprule
Unconstrained        & .094 & .138 & .135 & .134  & .61 & .116 & .195 & .581 &1.402 &7.649\\
Hardt's Post-Process & .097 & \boldred{.181} & .018 & .367  &1.057& .321 & \boldred{.441} & .226 & .536 &2.679\\
Constrained Opt.     & .105 & .110 & .049 & .026  & .076& .289 & .263 & .193 & .369 &2.003\\
Adv. Constr. Opt.    & .105 & .105 & .050 & .064  & .184& .303 & .275 & .022 & .312 &1.628\\
Wass-1 Penalty       & .114 & .151 & .001 &.015 &.050 & .313 & .315 & .0   & \boldblue{.008} & \boldblue{.039}\\
Wass-1 Penalty DB    & .114 & .131 & .001 &\boldblue{.006}&\boldblue{.018}& .313 & .315 & .0   & .011 & .051\\
Wass-1 Post-Process  & .100 & .144 & .016 & .020  & .062& .321 & .363 & .226 & .133 & .680\\
Wass-1 Post-Process $\hat p_{S}$  & .097 & .141 & .014 & .020  & .063& .321 & .335 & .226 & .159 & .822\\
\bottomrule
\end{tabular}
\end{center}
\label{table:Bank&Crime}
\end{table*}

\subsection{TRAINING DETAILS} 
In the approaches Unconstrained, Hardt's Post-Process, Wass-1 Penalty, and Wass-1 Post-Process, we trained a logistic regression model using Scikit-Learn with default hyper-parameters \citep{pedregosa11scikit-learn}. 

For Wass-1 Penalty (Algorithm~\ref{Alg:wass1_penalty}), as initial model parameters $\vtheta_0$ we used the ones given by the trained logistic regression. We swept over penalization coefficients $\alpha=[0, 0.5]$, $\beta=[10^{-2}, 3\cdot 10^{-2}, 10^{-1}, 3\cdot 10^{-1}, 1, 3, 10, 30, 10^2]$, gradient step sizes $\eta=[10^{-4}, 10^{-3}, 10^{-2}, 10^{-1}]$, set the maximum number of training steps to $M=80,000$, and computed the barycenter once every $K>M$ steps, effectively only once after the initialization of $\vtheta_0$.
In the computation of the barycenter (using the POT library by \cite{flamary2017pot}), we swept over numbers of bins $B = [50, 90]$, entropy penalty $\delta = [10^{-3}, 5\cdot 10^{-3}, 10^{-2}]$, and used number of iterations $M=1,000$. The time complexity of our implementation is $\mathcal{O}(N \log(N))$. Our gradient steps take on average $\sim$0.02 seconds. 

For Wass-1 Post-Process (Algorithm~\ref{Alg:wass1_post_process}), we used a number of bins $|{\cal B}|= 100$.

For Constrained Optimization, we used the hinge loss as objective and the hinge relaxation for the fairness constraints. We trained by jointly optimizing the model parameters and Lagrange multipliers on the Lagrangian using ADAM with the default step-size of $0.001$ and mini-batch size of $100$, and trained for $50$ steps. We allowed an additive slack of $0.05$ on the constraints, as otherwise we found feasibility issues leading to degenerate classifiers.

\subsection{DATASETS} 
{\bf The UCI Adult Dataset.}
The Adult dataset contains 14 attributes including age, working class, education level, marital status, occupation, relationship, race, gender, capital gain and loss, working hours, and nationality for 48,842 individuals; 32,561 and 16,281 for the training and test sets respectively. The goal is to predict whether the individual's annual income is above or below \$50,000. 

\emph{Pre-processing and Sensitive Attributes.} We pre-processed the data in the same way as done in \cite{zafar17fairness,goh16satisfying}. The categorical features were encoded into binary features (one for each category), and the continuous features were transformed into binary encodings depending on five quantile values, obtaining a total of $122$ features.
As sensitive attributes, we considered race (Black and White) and gender (female and male), obtaining four groups corresponding to black females, white females, black males, and white males.\\

{\bf The UCI German Credit Dataset.}
This dataset contains 20 attributes for 1,000 individuals applying for loans. Each applicant is classified as a good or bad credit risk, \ie~as likely or not likely to repay the loan. We randomly divided the dataset into training and test sets of sizes 670 and 330 respectively. 

\emph{Pre-processing and Sensitive Attributes.} We did not do any pre-processing. As sensitive attributes, we considered age ($\leq 30$ and $>30$ years old), obtaining two groups.\\

{\bf The UCI Bank Marketing Dataset.}
This dataset contains 20 attributes for 41,188 individuals. 
Each individual is classified as subscribed or not to a term deposit. We divided the dataset into train and test sets of sizes 32,950 and 8,238 respectively.

\emph{Pre-processing and Sensitive Attributes.} We pre-processed the data as for the Adult dataset. We transformed the categorical features into binary ones, and the continuous features into five binary features based on five quantile bins, obtaining a total of 60 features. We also subtracted the mean from cons.price.idx, cons.conf.idx, euribor3m, and nr.employed to make them zero-centered. As sensitive attributes, we considered age, which was discretized based on five quantiles leading to five groups.

{\bf The UCI Communities \& Crime Dataset.}
This dataset contains 135 attributes for 1994 communities; 1495 and 499 for the training and test sets respectively. The goal is to predict whether a community has high (above the 70-th percentile) crime rate.

\emph{Pre-processing and Sensitive Attributes.} We pre-processed the data as in \cite{wu18preventing}. As sensitive attributes, we considered race (Black, White, Asian and Hispanic), thresholded at the median to form height groups. 

\subsection{RESULTS}
\begin{figure*}
\subfigure{
\hspace{-2.6cm}\includegraphics[height=4cm,width=21cm]{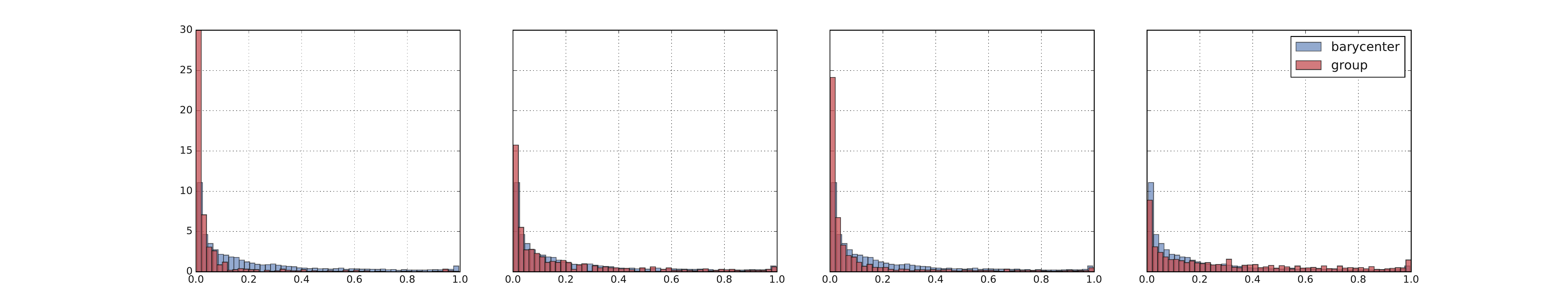}}
\subfigure{
\hspace{-2.6cm}\includegraphics[height=4cm,width=21cm]{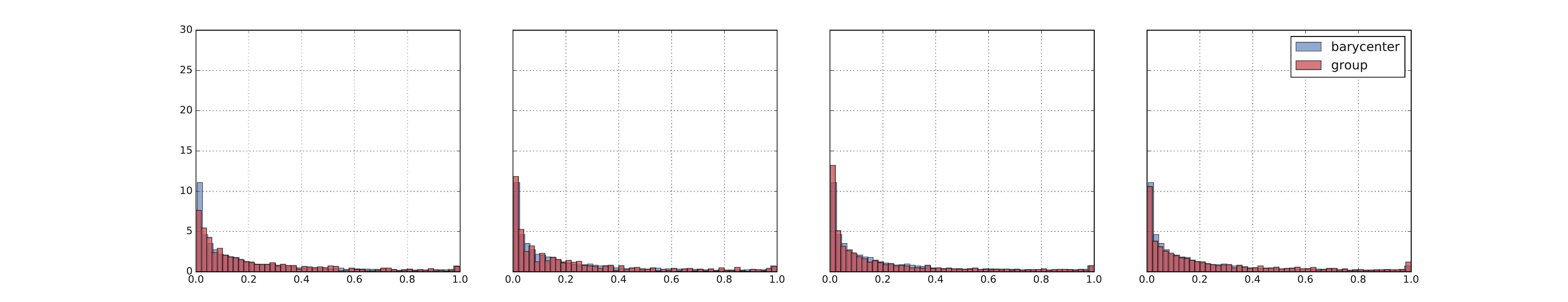}}
\caption{Histograms of model beliefs for groups of Black females, Black males, White females, and White males, and their barycenter on the Adult dataset using Wass-1 Penalty. Top: Initial state. 
Bottom: After 10,000 training steps with $\alpha=0,\beta=100$ each group histogram matches the barycenter.}
\label{fig:hists}
\end{figure*}

We compared the different methods using the following metrics:
\begin{description}
\setlength{\itemsep}{-6pt}  
\setlength{\parskip}{-4pt}
\setlength{\parsep}{-4pt}
\item[{\bf Err-.5:}] Binary classification error using threshold $\tau=0.5$, \ie~$\textrm{Err-.5} =\frac{1}{N}\sum_{n=1}^N\mathbbm{1}_{\hat y^n \neq y^n}$.\\
\item[{\bf Err-Exp:}] As above, but averaging over 100 uniformly-spaced thresholds $\tau \in [0,1]$.\\ 
\item[{\bf DD-.5:}] Demographic disparity at $\tau = 0.5$, summed over all groups $\va\in \mathcal{A}$, \ie~$\textrm{DD-.5} = \sum_{\va \in \mathcal{A}} |\mathbbm{P}(\Sa > 0.5) - \mathbbm{P}(S > 0.5)|$, where \eg~$\mathbbm{P}(S > \tau)$ is estimated as $\mathbbm{P}(S > \tau)\approx \frac{1}{N}\sum_{n=1}^N \mathbbm{1}_{s^n > \tau}$.\\
\item[{\bf SDD}] (strong demographic disparity): As above, but averaging over 100 uniformly-spaced thresholds $\tau \in [0,1]$, \ie~$\textrm{SDD}=\sum_{\va \in \mathcal{A}}\mathbb{E}_{\tau\sim U([0,1])} |\mathbbm{P}(S_{\va}>\tau) - \mathbbm{P}(S > \tau)|$. 
We use this metric to compare with other baselines that use the full-dataset belief distribution.\\
\item[{\bf SPDD:}] 
$\textrm{SPDD}=\sum_{\va,\bar\va\in \mathcal{A}} \mathbb{E}_{\tau\sim U([0,1])} |\mathbbm{P}(S_{\va}>\tau) - \mathbbm{P}(S_{\bar\va} > \tau)|$.
This metric is the most important, target-neutral, (un)fairness measurement as it does not depend on the target distribution, \eg~the full-dataset belief distribution or the barycenter.  
\end{description}

Figure~\ref{fig:hists} shows overlaying model belief histograms for four demographic groups and their barycenter in the Adult dataset. Wasserstein-1 Penalty  effectively matches  all group histograms to the barycenter after training for 10,000 steps with $\beta=100$. 

The main experiment results are shown in Tables~\ref{table:Adult&German} and \ref{table:Bank&Crime}\footnote{Given the deterministic baseline logistic regression model, all standard deviations are on the order of $10^{-4}$ or below.}. 
Focusing on the three more relevant metrics -- namely Err-Exp as the robust error measure, SDD as the conventional fairness comparison metric, and SPDD as the target-neural, preferred fairness metric (according to which we picked the best hyperparameter settings) --  
we can see that Wass-1 Penalty and Wass-1 Penalty DB have lowest SDD and SPDD (blue) on the German and Crime datasets and on the Adult and Bank datasets respectively. The fairness performance of these two methods are followed closely by the simpler Wass-1 Post-Process methods on all datasets. Hardt's Post-Process method incurs largest errors (red) on all datasets. After the Unconstrained baseline, Constrained Optimization and Adv. Contr. Opt. give lowest error on the Adult, Bank and Crime datasets, whilst Constrained Optimization and Wass-1 Penalty (DB) give lowest error on the German dataset. Overall the Wasserstein-1 methods gave best fairness performance on all the datasets with similar or lower compromise on accuracy than the baselines.

\begin{figure}[t]
\centering
\includegraphics[height=2.1in, trim={2cm 0 2cm 1cm},clip]{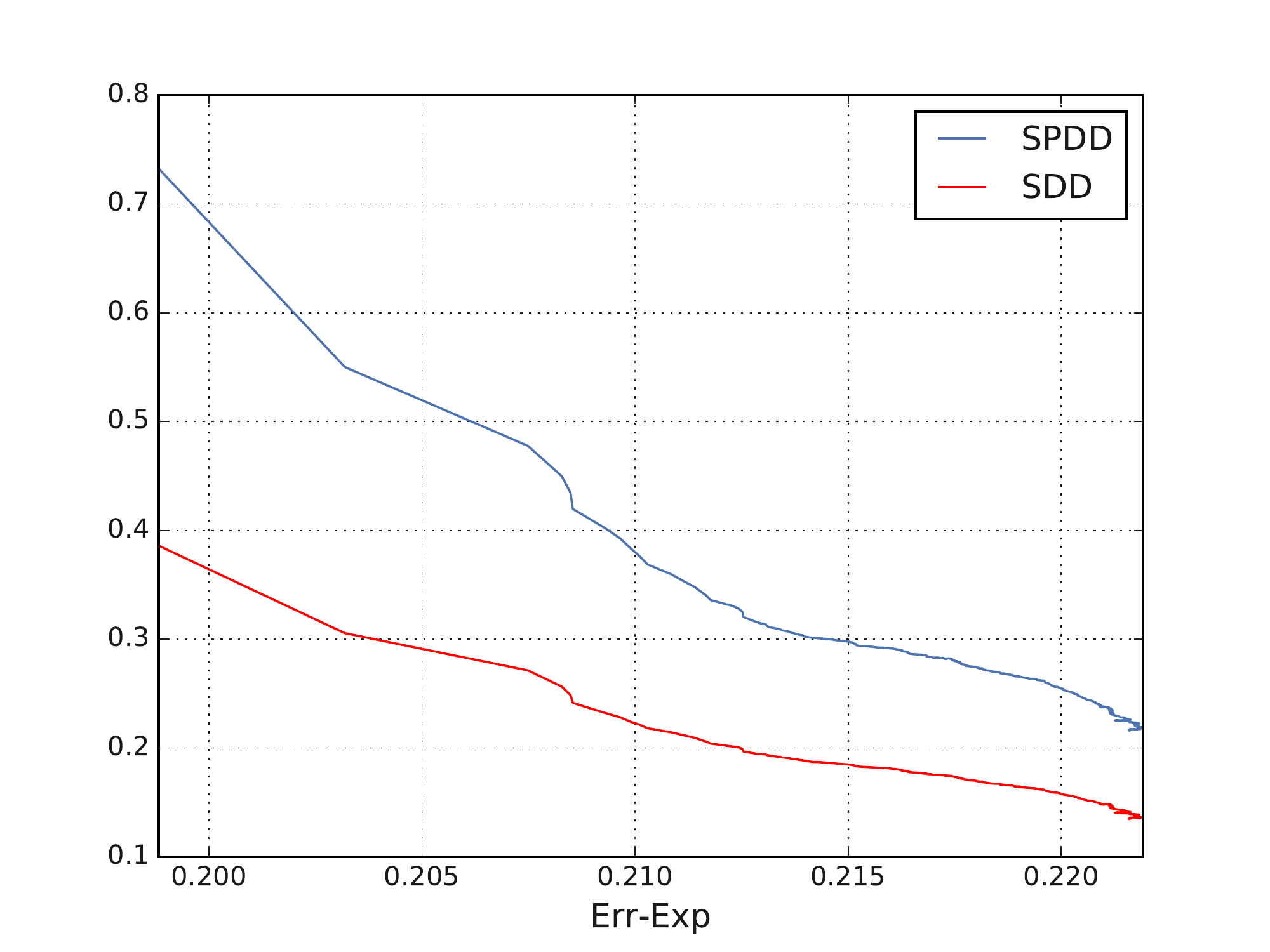}
\vskip-0.2cm
\caption[]{Err-Exp \versus~SDD, Err-Exp \versus~SPDD trade-off curves on Bank test set using Wass-1 Penalty DB, points plotted every 100 steps over 80,000 total training steps.}
\label{fig:tradeoffs}
\end{figure}
Since Wass-1 Penalty is trained by gradient descent, early-stopping can be an effective way to control trade-off between accuracy and fairness. Figure~\ref{fig:tradeoffs} shows a typical example of two trade-off curves between SDD/SPDD and Err-Exp. Though not always the case, often as the learning model moves towards the fairness goal of SDP, model accuracy decreases (Err-Exp increases). 

\section{CONCLUSIONS}
We introduced an approach to ensure that the output of a classification system does not depend on sensitive information using the Wasserstein-1 distance. 
We demonstrated that using the Wasserstein-1 barycenter enables us to reach independence with minimal modifications of the model decisions. 
We introduced two methods with different desirable properties, a Wasserstein-1 constrained method that does not necessarily require access to sensitive information at deployment time, and an alternative fast and practical approximation method that requires knowledge of sensitive information at test time. We showed that these methods outperform previous approaches in the literature.
\subsection*{Acknowledgements}
The authors would like to thank Mark Rowland for useful feedback on the manuscript.
\bibliography{main}

\begin{thebibliography}{50}
\providecommand{\natexlab}[1]{#1}
\providecommand{\url}[1]{\texttt{#1}}
\expandafter\ifx\csname urlstyle\endcsname\relax
  \providecommand{\doi}[1]{doi: #1}\else
  \providecommand{\doi}{doi: \begingroup \urlstyle{rm}\Url}\fi

\bibitem[Agarwal et~al.(2018)Agarwal, Beygelzimer, Dud{\'\i}k, Langford, and
  Wallach]{agarwal18reduction}
A.~Agarwal, A.~Beygelzimer, M.~Dud{\'\i}k, J.~Langford, and H.~Wallach.
\newblock A reduction approach to fair classification.
\newblock In \emph{Proceedings of the 35th International Conference on Machine
  Learning}, 2018.

\bibitem[Agueh and Carlier(2011)]{agueh11barycenters}
M.~Agueh and G.~Carlier.
\newblock Barycenters in the {W}asserstein space.
\newblock \emph{{SIAM} Journal on Mathematical Analysis}, 43\penalty0
  (2):\penalty0 904--924, 2011.

\bibitem[Benamou et~al.(2015)Benamou, Carlier, Cuturi, Nenna, and
  Peyr{\'e}]{benamou15iterative}
J.~Benamou, G.~Carlier, M.~Cuturi, L.~Nenna, and G.~Peyr{\'e}.
\newblock Iterative {B}regman projections for regularized transportation
  problems.
\newblock \emph{SIAM Journal on Scientific Computing}, 2\penalty0
  (37):\penalty0 A1111--A1138, 2015.

\bibitem[Beutel et~al.(2017)Beutel, Chen, Zhao, and Chi]{beutel17data}
A.~Beutel, J.~Chen, Z.~Zhao, and {E. H.} Chi.
\newblock Data decisions and theoretical implications when adversarially
  learning fair representations.
\newblock \emph{CoRR}, abs/1707.00075, 2017.

\bibitem[Boyd and Vandenberghe(2004)]{boyd04convex}
S.~Boyd and L.~Vandenberghe.
\newblock \emph{Convex Optimization}.
\newblock Cambridge University Press, 2004.

\bibitem[Calders et~al.(2009)Calders, Kamiran, and
  Pechenizkiy]{calders09building}
T.~Calders, F.~Kamiran, and M.~Pechenizkiy.
\newblock Building classifiers with independency constraints.
\newblock In \emph{Data mining workshops, 2009. ICDMW'09. IEEE international
  conference on}, pages 13--18, 2009.

\bibitem[Calmon et~al.(2017)Calmon, Wei, Vinzamuri, Ramamurthy, and
  Varshney]{calmon17optimized}
F.~Calmon, D.~Wei, B.~Vinzamuri, {K. N.} Ramamurthy, and {K. R.} Varshney.
\newblock Optimized pre-processing for discrimination prevention.
\newblock In \emph{Advances in Neural Information Processing Systems 30}, pages
  3995--4004, 2017.

\bibitem[Chiappa(2019)]{chiappa19path}
S.~Chiappa.
\newblock Path-specific counterfactual fairness.
\newblock In \emph{Thirty-Third AAAI Conference on Artificial Intelligence},
  pages 7801--7808, 2019.

\bibitem[Corbett-Davies et~al.(2017)Corbett-Davies, Pierson, Feller, Goel, and
  Huq]{corbett-davies17algorithmic}
S.~Corbett-Davies, E.~Pierson, A.~Feller, S.~Goel, and A.~Huq.
\newblock Algorithmic decision making and the cost of fairness.
\newblock In \emph{Proceedings of the 23rd {ACM SIGKDD} International
  Conference on Knowledge Discovery and Data Mining}, pages 797--806, 2017.

\bibitem[Cotter et~al.(2018)Cotter, Jiang, and Sridharan]{cotter18two}
A.~Cotter, H.~Jiang, and K.~Sridharan.
\newblock Two-player games for efficient non-convex constrained optimization.
\newblock \emph{CoRR}, abs/1804.06500, 2018.

\bibitem[Cuturi(2013)]{cuturi13sinkhorn}
M.~Cuturi.
\newblock Sinkhorn distances: Lightspeed computation of optimal transport.
\newblock In \emph{Advances in Neural Information Processing Systems 26}, pages
  2292--2300, 2013.

\bibitem[{De Fauw et al.}(2018)]{defauw18clinically}
J.~{De Fauw et al.}
\newblock Clinically applicable deep learning for diagnosis and referral in
  retinal disease.
\newblock \emph{Nature Medicine}, 24\penalty0 (9):\penalty0 1342--1350, 2018.

\bibitem[{Del Barrio} et~al.(2019){Del Barrio}, Gamboa, Gordaliza, and
  Loubes]{delbarrio18obtaining}
E.~{Del Barrio}, F.~Gamboa, P.~Gordaliza, and J.-M. Loubes.
\newblock Obtaining fairness using optimal transport theory.
\newblock In \emph{Proceedings of the 36th International Conference on Machine
  Learning}, pages 2357--2365, 2019.

\bibitem[Deshpande et~al.(2018)Deshpande, Zhang, and
  Schwing]{deshpande18generative}
I.~Deshpande, Z.~Zhang, and {A. G.} Schwing.
\newblock Generative modeling using the sliced {W}asserstein distance.
\newblock In \emph{The {IEEE} Conference on Computer Vision and Pattern
  Recognition}, 2018.

\bibitem[Dieterich et~al.(2016)Dieterich, Mendoza, and
  Brennan]{dieterich16compas}
W.~Dieterich, C.~Mendoza, and T.~Brennan.
\newblock Compas risk scales: Demonstrating accuracy equity and predictive
  parity, 2016.

\bibitem[Doherty et~al.(2012)Doherty, Kartasheva, and
  Phillips]{doherty12information}
{N. A.} Doherty, {A. V.} Kartasheva, and {R. D.} Phillips.
\newblock Information effect of entry into credit ratings market: The case of
  insurers' ratings.
\newblock \emph{Journal of Financial Economics}, 106\penalty0 (2):\penalty0
  308--330, 2012.

\bibitem[Donini et~al.(2018)Donini, Oneto, Ben-David, Shawe-Taylor, and
  Pontil]{donini18empirical}
M.~Donini, L.~Oneto, S.~Ben-David, J.~Shawe-Taylor, and M.~Pontil.
\newblock Empirical risk minimization under fairness constraints.
\newblock In \emph{Advances in Neural Information Processing Systems 31}, pages
  2791--2801, 2018.

\bibitem[Eban et~al.(2017)Eban, Schain, Mackey, Gordon, Rifkin, and
  Elidan]{eban17scalable}
E.~Eban, M.~Schain, A.~Mackey, A.~Gordon, R.~Rifkin, and G.~Elidan.
\newblock Scalable learning of non-decomposable objectives.
\newblock In \emph{Proceedings of the 20th International Conference on
  Artificial Intelligence and Statistics}, pages 832--840, 2017.

\bibitem[Edwards and Storkey(2016)]{edwards16censoring}
H.~Edwards and A.~Storkey.
\newblock Censoring representations with an adversary.
\newblock In \emph{4th International Conference on Learning Representations},
  2016.

\bibitem[Eubanks(2018)]{eubanks18automating}
V.~Eubanks.
\newblock \emph{Automating Inequality: How High-Tech Tools Profile, Police, and
  Punish the Poor}.
\newblock St. Martin's Press, 2018.

\bibitem[Feldman(2015)]{feldman15computational}
M.~Feldman.
\newblock Computational fairness: Preventing machine-learned discrimination.
\newblock 2015.

\bibitem[Feldman et~al.(2015)Feldman, Friedler, Moeller, Scheidegger, and
  Venkatasubramanian]{feldman15certifying}
M.~Feldman, {S. A.} Friedler, J.~Moeller, C.~Scheidegger, and
  S.~Venkatasubramanian.
\newblock Certifying and removing disparate impact.
\newblock In \emph{Proceedings of the 21th ACM SIGKDD International Conference
  on Knowledge Discovery and Data Mining}, pages 259--268, 2015.

\bibitem[Fish et~al.(2015)Fish, Kun, and Lelkes]{fish15fair}
B.~Fish, J.~Kun, and {A. D.} Lelkes.
\newblock Fair boosting: a case study.
\newblock In \emph{FAT/ML Workshop}, 2015.

\bibitem[Flamary and Courty(2017)]{flamary2017pot}
R.~Flamary and N.~Courty.
\newblock {POT} python optimal transport library, 2017.
\newblock \url{https://github.com/rflamary/POT}.

\bibitem[Goh et~al.(2016)Goh, Cotter, Gupta, and Friedlander]{goh16satisfying}
G.~Goh, A.~Cotter, M.~Gupta, and {M. P.} Friedlander.
\newblock Satisfying real-world goals with dataset constraints.
\newblock In \emph{Advances in Neural Information Processing Systems 29}, pages
  2415--2423, 2016.

\bibitem[Hardt et~al.(2016)Hardt, Price, and Srebro]{hardt16equality}
M.~Hardt, E.~Price, and N.~Srebro.
\newblock Equality of opportunity in supervised learning.
\newblock In \emph{Advances in Neural Information Processing Systems 29}, pages
  3315--3323, 2016.

\bibitem[Hoffman et~al.(2018)Hoffman, Kahn, and Li]{hoffman18discretion}
M.~Hoffman, {L. B.} Kahn, and D.~Li.
\newblock Discretion in hiring.
\newblock \emph{The Quarterly Journal of Economics}, 133\penalty0 (2):\penalty0
  765--800, 2018.

\bibitem[Johndrow and Lum(2019)]{johndrow19algorithm}
J.~Johndrow and K.~Lum.
\newblock An algorithm for removing sensitive information: Application to
  race-independent recidivism prediction.
\newblock \emph{The Annals of Applied Statistics}, 13\penalty0 (1):\penalty0
  189--220, 2019.

\bibitem[Kamiran and Calders(2009)]{kamiran09classifying}
F.~Kamiran and T.~Calders.
\newblock Classifying without discriminating.
\newblock In \emph{Computer, Control and Communication, 2009. IC4 2009. 2nd
  International Conference on}, pages 1--6, 2009.

\bibitem[Kamiran and Calders(2012)]{kamiran12data}
F.~Kamiran and T.~Calders.
\newblock Data preprocessing techniques for classification without
  discrimination.
\newblock \emph{Knowledge and Information Systems}, 33\penalty0 (1):\penalty0
  1--33, 2012.

\bibitem[Kantorovich(1942)]{kantorovich42on}
L.~Kantorovich.
\newblock On the transfer of masses (in {R}ussian).
\newblock \emph{Doklady Akademii Nauk}, 37\penalty0 (2):\penalty0 227--229,
  1942.

\bibitem[Klenke(2013)]{klenke13probability}
A.~Klenke.
\newblock \emph{Probability Theory: A Comprehensive Course}.
\newblock Springer Science \& Business Media, 2013.

\bibitem[Komiyama et~al.(2018)Komiyama, Takeda, Honda, and
  Shimao]{komiyama18nonconvex}
J.~Komiyama, A.~Takeda, J.~Honda, and H.~Shimao.
\newblock Nonconvex optimization for regression with fairness constraints.
\newblock In \emph{Proceedings of the 35th International Conference on Machine
  Learning}, pages 2742--2751, 2018.

\bibitem[Kusner et~al.(2017)Kusner, Loftus, Russell, and
  Silva]{kusner17counterfactual}
{M. J.} Kusner, {J. R.} Loftus, C.~Russell, and R.~Silva.
\newblock Counterfactual fairness.
\newblock In \emph{Advances in Neural Information Processing Systems 30}, pages
  4069--4079, 2017.

\bibitem[Laisant(1905)]{laisant05integration}
{C.-A.} Laisant.
\newblock Int\'egration des fonctions inverses.
\newblock \emph{Nouvelles annales de mathématiques, journal des candidats aux
  écoles polytechnique et normale}, 5 (4):\penalty0 253--257, 1905.

\bibitem[Lichman(2013)]{lichman13uci}
M.~Lichman.
\newblock {UCI} {M}achine {L}earning {R}epository, 2013.
\newblock \url{http://archive.ics.uci.edu/ml}.

\bibitem[Louizos et~al.(2016)Louizos, Swersky, Li, Welling, and
  Zemel]{louizos16fair}
C.~Louizos, K.~Swersky, Y.~Li, M.~Welling, and R.~Zemel.
\newblock The variational fair autoencoder.
\newblock In \emph{4th International Conference on Learning Representations},
  2016.

\bibitem[Malekipirbazari and Aksakalli(2015)]{malekipirbazari15risk}
M.~Malekipirbazari and V.~Aksakalli.
\newblock Risk assessment in social lending via random forests.
\newblock \emph{Expert Systems with Applications}, 42\penalty0 (10):\penalty0
  4621--4631, 2015.

\bibitem[Massart(1990)]{massart90tight}
P.~Massart.
\newblock The tight constant in the {D}voretzky-{K}iefer-{W}olfowitz
  inequality.
\newblock \emph{The Annals of Probability}, pages 1269--1283, 1990.

\bibitem[Monge(1781)]{monge81memoire}
G.~Monge.
\newblock Memoire sur la theorie des d\'eblais et des remblais.
\newblock \emph{Histoire de l' Acad\'emie des Sciences de Paris}, 1781.

\bibitem[Narasimhan(2018)]{narasimhan18learning}
H.~Narasimhan.
\newblock Learning with complex loss functions and constraints.
\newblock In \emph{Proceedings of the 21st International Conference on
  Artificial Intelligence and Statistics}, pages 1646--1654, 2018.

\bibitem[Pedregosa and {et al.}(2011)]{pedregosa11scikit-learn}
F.~Pedregosa and {et al.}
\newblock Scikit-learn: Machine learning in {P}ython.
\newblock \emph{Journal of Machine Learning Research}, 12:\penalty0 2825--2830,
  2011.

\bibitem[Perlich et~al.(2014)Perlich, Dalessandro, Raeder, Stitelman, and
  Provost]{perlich14machine}
C.~Perlich, B.~Dalessandro, T.~Raeder, O.~Stitelman, and F.~Provost.
\newblock Machine learning for targeted display advertising: Transfer learning
  in action.
\newblock \emph{Machine Learning}, 95\penalty0 (1):\penalty0 103--127, 2014.

\bibitem[Rachev and R{\"u}schendorf(1998)]{rachev98mass}
S.~Rachev and L.~R{\"u}schendorf.
\newblock \emph{Mass Transportation Problems}.
\newblock Springer, 1998.

\bibitem[Weed and Bach(2017)]{weed17sharp}
J.~Weed and F.~Bach.
\newblock Sharp asymptotic and finite-sample rates of convergence of empirical
  measures in {W}asserstein distance.
\newblock \emph{CoRR}, abs/1707.00087, 2017.

\bibitem[Wu et~al.(2018)Wu, Kearns, Neel, and Roth]{wu18preventing}
S.~Wu, M.~Kearns, S.~Neel, and A.~Roth.
\newblock Preventing fairness gerrymandering: Auditing and learning for
  subgroup fairness.
\newblock In \emph{Proceedings of the 35th International Conference on Machine
  Learning}, pages 2564--2572, 2018.

\bibitem[Zafar et~al.(2017)Zafar, Valera, Rodriguez, and
  Gummadi]{zafar17fairness}
{M. B.} Zafar, I.~Valera, M.~Gomez Rodriguez, and {K. P.} Gummadi.
\newblock Fairness constraints: Mechanisms for fair classification.
\newblock In \emph{Proceedings of the 20th International Conference on
  Artificial Intelligence and Statistics}, pages 962--970, 2017.

\bibitem[Zemel et~al.(2013)Zemel, Wu, Swersky, Pitassi, and
  Dwork]{zemel13learning}
R.~Zemel, Y.~Wu, K.~Swersky, T.~Pitassi, and C.~Dwork.
\newblock Learning fair representations.
\newblock In \emph{Proceedings of the 30th International Conference on Machine
  Learning}, pages 325--333, 2013.

\bibitem[Zhang et~al.(2018)Zhang, Lemoine, and Mitchell]{zhang18mitigating}
B.~Zhang, H~Lemoine, and M.~Mitchell.
\newblock Mitigating unwanted biases with adversarial learning.
\newblock In \emph{Proceedings of the 2018 AAAI/ACM Conference on AI, Ethics,
  and Society}, pages 335--340, 2018.

\bibitem[{\v{Z}}liobaite et~al.(2011){\v{Z}}liobaite, Kamiran, and
  Calders]{vzliobaite11handling}
I.~{\v{Z}}liobaite, F.~Kamiran, and T.~Calders.
\newblock Handling conditional discrimination.
\newblock In \emph{Proceedings of the 2011 {IEEE} 11th International Conference
  on Data Mining}, pages 992--1001, 2011.

\end{thebibliography}
\bibliographystyle{plainnat}

\clearpage

\appendix
\onecolumn
{\Large \bf Appendix}
\section{Empirical Estimates}

\setcounter{lemma}{0}
\begin{lemma}\label{lemma::subgradient_cost}
As $| \mathcal{D} | \rightarrow \infty$, if $\W_1(p_{S}, p_{S_{\va}})< \infty $ for all $\va$, the empirical barycenter satisfies $\lim \sum_{\va} \hat{p}_{\va} \W_1(\hat{p}_{\bS}, \hat{p}_{S_\va}) \rightarrow \sum_{\va} p_{\va} \W_1(p_{\bS}, p_{S_\va})$ almost surely\footnote{See \cite{klenke13probability} for a formal definition of almost sure convergence of random variables.}. \end{lemma}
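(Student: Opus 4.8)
The plan is to prove convergence of the \emph{optimal values} of the barycenter problem rather than convergence of the barycenter measures themselves; this sidesteps any delicate questions about uniqueness or stability of the minimizer. Define the population and empirical objective functionals on $\mathcal{P}(\Omega)$ by
\[
F(p) = \sum_{\va} p_\va \W_1(p_{S_\va}, p), \qquad \hat{F}(p) = \sum_{\va} \hat{p}_\va \W_1(\hat{p}_{S_\va}, p),
\]
so that the right-hand side of the claim is $\min_p F(p) = F(p_{\bS})$ and the left-hand side is $\min_p \hat{F}(p) = \hat{F}(\hat{p}_{\bS})$. If I can establish $\sup_{p \in \mathcal{P}(\Omega)} |\hat{F}(p) - F(p)| \to 0$ almost surely, then the elementary sandwiching inequality $|\min_p \hat{F} - \min_p F| \le \sup_p |\hat{F} - F|$ immediately yields $\hat{F}(\hat{p}_{\bS}) \to F(p_{\bS})$ a.s., which is exactly the assertion.

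The first main step is the uniform bound. Splitting each summand as
\[
\hat{p}_\va \W_1(\hat{p}_{S_\va}, p) - p_\va \W_1(p_{S_\va}, p) = \hat{p}_\va\big[\W_1(\hat{p}_{S_\va}, p) - \W_1(p_{S_\va}, p)\big] + (\hat{p}_\va - p_\va)\W_1(p_{S_\va}, p),
\]
and applying the triangle inequality $|\W_1(\hat{p}_{S_\va}, p) - \W_1(p_{S_\va}, p)| \le \W_1(\hat{p}_{S_\va}, p_{S_\va})$ together with $\W_1(p_{S_\va}, p) \le 1$ (valid since both measures live on $\Omega=[0,1]$, so their CDFs differ by at most $1$ over an interval of length $1$), I obtain the $p$-independent estimate
\[
\sup_{p \in \mathcal{P}(\Omega)} |\hat{F}(p) - F(p)| \;\le\; \sum_{\va} \hat{p}_\va\, \W_1(\hat{p}_{S_\va}, p_{S_\va}) \;+\; \sum_{\va} |\hat{p}_\va - p_\va|.
\]
The second step drives both terms to zero almost surely. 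For the weights, $\hat{p}_\va = N_\va/N \to p_\va$ a.s.\ by the strong law of large numbers, and for groups of positive probability this forces $N_\va \to \infty$ a.s.\ (groups of zero probability contribute negligibly, since their prefactor $\hat{p}_\va \to 0$ while $\W_1 \le 1$). For the Wasserstein terms, I would invoke the one-dimensional identity $\W_1(\hat{p}_{S_\va}, p_{S_\va}) = \int_0^1 |\hat{P}_{S_\va}(t) - P_{S_\va}(t)|\,dt$ used in Proposition~\ref{prop:1}, which is dominated by $\sup_t |\hat{P}_{S_\va}(t) - P_{S_\va}(t)|$; this supremum vanishes a.s.\ as $N_\va \to \infty$ by the Glivenko--Cantelli theorem. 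Using $\hat{p}_\va \le 1$ and finiteness of $\mathcal{A}$, both sums tend to $0$ a.s., and the sandwiching argument concludes.

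I expect the only genuine technical content to sit in the almost-sure $\W_1$-convergence of the empirical group measures; on the compact domain $[0,1]$ this collapses cleanly to Glivenko--Cantelli via the CDF representation of $\W_1$, so the stated hypothesis $\W_1(p_S, p_{S_\va}) < \infty$ holds automatically and plays no real role. The key conceptual move is to bound the objective gap \emph{uniformly in} $p$ before taking minima, rather than attempting to track the random barycenter $\hat{p}_{\bS}$ directly.
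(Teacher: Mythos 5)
Your proof is correct, and it takes a recognizably different route from the paper's. The paper works directly with the two barycenters: it writes down two chains of triangle inequalities, invokes the optimality of $p_{\bS}$ for the population objective and of $\hat{p}_{\bS}$ for the empirical one to flip each chain into the needed direction, and arrives at the two-sided bound
$\big|\sum_{\va} p_{\va}\W_1(p_{\bS},p_{S_\va}) - \sum_{\va}\hat{p}_{\va}\W_1(\hat{p}_{\bS},\hat{p}_{S_\va})\big| \le \sum_{\va}\W_1(p_{S_\va},\hat{p}_{S_\va}) + |p_\va-\hat{p}_\va|\,\W_1(p_S,p_{S_\va})$,
then cites \cite{weed17sharp} for the almost-sure convergence $\W_1(\hat{p}_{S_\va},p_{S_\va})\to 0$. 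Your uniform-convergence-plus-sandwich argument is the same mechanism in abstract form --- the inequality $|\min_p \hat{F} - \min_p F| \le \sup_p|\hat{F}-F|$ is exactly the two barycenter-optimality steps in disguise --- but the repackaging buys three concrete things. First, it replaces the paper's bookkeeping with a single $p$-independent estimate. Second, your justification of the key convergence is elementary and self-contained: on $\Omega=[0,1]$ the identity $\W_1(\hat{p}_{S_\va},p_{S_\va})=\int_0^1|\hat{P}_{S_\va}-P_{S_\va}|$ reduces everything to Glivenko--Cantelli, whereas the paper outsources this step to Weed and Bach (interestingly, the paper's own Appendix B uses the same CDF identity with Dvoretzky--Kiefer--Wolfowitz, so your move is very much in its spirit). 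Third, your bound $\W_1(p_{S_\va},p)\le 1$ makes the hypothesis $\W_1(p_S,p_{S_\va})<\infty$ visibly vacuous, which you correctly point out; the paper needs that hypothesis only because its final display retains the unweighted factor $\W_1(p_S,p_{S_\va})$. You are also more careful than the paper on one small point: in its last step the paper bounds $\hat{p}_\va\W_1(p_{S_\va},\hat{p}_{S_\va})\le \W_1(p_{S_\va},\hat{p}_{S_\va})$, which silently requires $N_\va\to\infty$ for every group, while your retention of the $\hat{p}_\va$ weight handles groups with $p_\va=0$ cleanly. One minor nicety you could add for completeness: the sandwich inequality as stated presumes the minima are attained, but either pass to infima (the inequality survives unchanged) or note that $\mathcal{P}([0,1])$ is $\W_1$-compact and $F,\hat{F}$ are continuous, so minimizers exist; likewise, applying Glivenko--Cantelli within group $\va$ uses that, conditionally on group membership, the beliefs are i.i.d.\ from $p_{S_\va}$ and $N_\va\to\infty$ almost surely when $p_\va>0$ --- both standard, neither a gap.
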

\begin{proof}
By triangle inequality:
\begin{align}
\sum_{\va} \hat{p}_{\va} \W_1(\hat{p}_{\bS}, p_{S_\va}) &\leq \sum_{\va} \hat{p}_{\va}\W_1(\hat{p}_{\bS}, \hat{p}_{S_\va}) +  \hat{p}_{\va}\W_1(p_{S_\va}, \hat{p}_{S_\va})\,, \label{eq::inequality_barycenter_1}\\
\sum_{\va} p_{\va}\W_1(p_{\bS}, \hat{p}_{S_\va}) &\leq \sum_{\va}p_{\va} \W_1(p_{\bS}, p_{S_\va}) +  p_{\va}\W_1(p_{S_\va}, \hat{p}_{S_\va})\,. \label{eq::inequality_barycenter_2}
\end{align}
Since $p_{\bar{S}}$ and $\hat{p}_{\bar{S}}$ are the weighted barycenters of $\{p_{S_{\va}}\}$ and $\{ \hat{p}_{S_{\va}}\}$ respectively:
\begin{align}
\sum_{\va} p_{\va}\W_1(p_{\bS}, p_{S_\va}) &\leq \sum_{\va}p_{\va} \W_1(\hat{p}_{\bS} , p_{S_\va}) \label{eq::barycenter_one} \,,\\
\sum_{\va}\hat{p}_{\va} \W_1(\hat{p}_{\bS}, \hat{p}_{S_\va} ) &\leq \sum_{\va} \hat{p}_{\va} \W_1(p_{\bS}, \hat{p}_{S_\va})\,. \label{eq::barycenter_two}
\end{align}
Combining Eqs. (\ref{eq::inequality_barycenter_1}) and (\ref{eq::barycenter_one}), and (\ref{eq::inequality_barycenter_2}) and (\ref{eq::barycenter_two}):

\begin{align*}
    \sum_{\va} p_{\va} \W_1(p_{\bS}, p_{S_{\va}} ) &\leq \sum_{\va} p_{\va} \W_1( \hat{p}_{\bS}, \hat{p}_{S_{\va}}) + p_{\va}\W_1(p_{S_{\va}}, \hat{p}_{S_{\va}}) \\
    &\leq \sum_{\va} \hat{p}_{\va} \W_1( \hat{p}_{\bS}, \hat{p}_{S_{\va}}) +   |  \hat{p}_{\va} \W_1( \hat{p}_{\bS}, \hat{p}_{S_{\va}}) - p_{\va} \W_1( \hat{p}_{\bS}, \hat{p}_{S_{\va}})    | + p_{\va}\W_1(p_{S_{\va}}, \hat{p}_{S_{\va}}) \\
    & \leq \sum_{\va} \hat{p}_{\va} \W_1( \hat{p}_{\bS}, \hat{p}_{S_{\va}}) +   |  \hat{p}_{\va} - p_{\va} |\cdot|   \W_1( \hat{p}_{\bS}, \hat{p}_{S_{\va}}) | + p_{\va}\W_1(p_{S_{\va}}, \hat{p}_{S_{\va}})\\
         \sum_{\va} \hat{p}_{{\va}} \W_1( \hat{p}_{\bS}, \hat{p}_{S_{\va}})&\leq   \sum_{\va} \hat{p}_{\va} \W_1(p_{\bS}, p_{S_{\va}} ) +\hat{p}_{\va} \W_1(p_{S_{\va}}, \hat{p}_{S_{\va}}) \\
         &\leq \sum_{\va} p_{\va} \W_1(p_{\bS}, p_{S_{\va}} ) + |p_{\va} \W_1(p_{\bS}, p_{S_{\va}} )-\hat{p}_{\va} \W_1(p_{\bS}, p_{S_{\va}} ) | +\hat{p}_{\va} \W_1(p_{S_{\va}}, \hat{p}_{S_{\va}}) \\
         &\leq \sum_{\va} p_{\va} \W_1(p_{\bS}, p_{S_{\va}} ) + | p_{\va} - \hat{p}_{\va}|\cdot| \W_1(p_{\bS}, p_{S_{\va}} ) | +\hat{p}_{\va} \W_1(p_{S_{\va}}, \hat{p}_{S_{\va}})\,.
\end{align*}
Therefore the following inequality holds almost surely:
\begin{align*}
    \Big|  \sum_{\va}  p_{\va}\W_1(p_{\bS}, p_{S_{\va}} )-   \sum_{\va} \hat{p}_{\va} \W_1( \hat{p}_{\bS}, \hat{p}_{S_{\va}}) \Big| &\leq\sum_{\va}  \hat{p}_{\va} \W_1(p_{S_{\va}}, \hat{p}_{S_{\va}}) + | p_{\va} - \hat{p}_{\va}|\cdot \W_1(p_{\bS}, p_{S_{\va}} )  \\
    &\leq \sum_{\va}  \W_1(p_{S_{\va}}, \hat{p}_{S_{\va}}) + | p_{\va} - \hat{p}_{\va}|\cdot \W_1(p_{\bS}, p_{S_{\va}} )  \\
    &\leq \sum_{\va}  \W_1(p_{S_{\va}}, \hat{p}_{S_{\va}}) + | p_{\va} - \hat{p}_{\va}|\cdot  \W_1(p_{S}, p_{S_{\va}} )\,.
\end{align*}


Since $\W_1(p_{S_\va}, \hat{p}_{S_{\va}})\rightarrow 0$ almost surely for all $\va$ (see \cite{weed17sharp}), and $\hat{p}_{\va} \rightarrow p_{{\va}}$ almost surely (by the strong law of large numbers) and $\W_1(p_{S}, p_{S_{\va}} ) < \infty$ for all $\va$, the result follows:

\begin{equation*}
\lim \sum_{\va} \hat{p}_{{\va}} \W_1(\hat{p}_{\bS}, \hat{p}_{S_\va}) \rightarrow \sum_{\va} p_{\va}\W_1(p_{\bS}, p_{S_\va})\,, 
\end{equation*}
almost surely.
\end{proof}

\section{Generalization}
The following lemma addresses generalization of the Wasserstein-1 objective. Assume $\W_1(p_{S_\va}, p_{\bS}) \leq L$ for all $\va\in \mathcal{A}$. Let $P_S, P_{S_\va}$ and $P_{\bS}$ be the cumulative density functions of $S$, $S_\va$ and $\bS$. Assume these random variables all have domain $\Omega = [0,1]$ and that all $P \in \{  P_S, P_{\bS}\} \cup \{  P_{S_\va}\}_{\va \in \mathcal{A}}$ are continuous, then:

\setcounter{lemma}{4}
\begin{lemma}\label{lemma:was_gen} 
For any $\epsilon, \delta > 0$, if 
$\min\big[\bar{N}, \min_{\va}\big[N_\va\big]\big]\geq  \frac{16\log(2|\mathcal{A}|/\delta)|\mathcal{A}|^2\max[1, L]^2}{\epsilon^2}$, with probability $1-\delta$:
\begin{equation*}
    \sum_{\va \in \mathcal{A}} p_{\va} \W_1(p_{S_{\va}}, p_{\bS})  
    \leq \sum_{\va \in \mathcal{A}} \hat{p}_{\va} \W_1(\hat{p}_{S_\va},\hat{p}_{\bS}) + \epsilon\,.
\end{equation*}
In other words, provided access to sufficient samples, a low value of $\sum_{\va} \hat{p}_{\va} \W_1(\hat{p}_{S_\va}, \hat{p}_{\bS})$ implies a low value for $\sum_{\va} p_{\va} \W_1(p_{S_\va}, p_{\bS}) $ with high probability and therefore good performance at test time. 
\end{lemma}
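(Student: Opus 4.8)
The plan is to reduce everything to the one-dimensional CDF representation of $\W_1$ established in Proposition~\ref{prop:1}, and then control the gap between the population and empirical objects with a Dvoretzky--Kiefer--Wolfowitz (DKW) uniform-deviation bound together with a union bound over the $|\mathcal{A}|$ groups. Since all variables live on $\Omega=[0,1]$, Proposition~\ref{prop:1} gives $\W_1(\mu,\nu)=\int_0^1|P_\mu(\tau)-P_\nu(\tau)|d\tau\le \sup_\tau|P_\mu(\tau)-P_\nu(\tau)|$, and also $\W_1(\mu,\nu)\le 1$ for any two distributions on $[0,1]$. These two elementary facts about $\W_1$ are the only structural inputs I need: the first converts a Wasserstein deviation into a uniform CDF deviation that DKW can handle, the second gives a crude a-priori bound on empirical transport costs.

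First I would fix a group $\va$, apply the triangle inequality in the $\W_1$ metric,
$$\W_1(p_{\Sa}, p_{\bS}) \le \W_1(\hat p_{S_\va}, \hat p_{\bS}) + \W_1(p_{\Sa}, \hat p_{S_\va}) + \W_1(\hat p_{\bS}, p_{\bS}),$$
multiply by $p_\va$, and sum over $\va$. Replacing the population weights $p_\va$ by the empirical weights $\hat p_\va$ in the leading term yields
$$\sum_{\va} p_\va \W_1(p_{\Sa}, p_{\bS}) \le \sum_{\va} \hat p_\va \W_1(\hat p_{S_\va}, \hat p_{\bS}) + E_{\mathrm{grp}} + E_{\mathrm{bar}} + E_{\mathrm{wt}},$$
where $E_{\mathrm{grp}}=\sum_\va p_\va\W_1(p_{\Sa},\hat p_{S_\va})$, $E_{\mathrm{bar}}=\W_1(\hat p_{\bS},p_{\bS})$ (using $\sum_\va p_\va=1$), and $E_{\mathrm{wt}}=\sum_\va(p_\va-\hat p_\va)\W_1(\hat p_{S_\va},\hat p_{\bS})$. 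It then suffices to make each of these three error terms at most $\epsilon/3$ with high probability.

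Next I would bound the three terms by concentration. For $E_{\mathrm{grp}}$, the CDF bound gives $\W_1(p_{\Sa},\hat p_{S_\va})\le\sup_\tau|P_{\Sa}(\tau)-\hat P_{S_\va}(\tau)|$, which DKW controls by $O(\sqrt{\log(2|\mathcal{A}|/\delta)/N_\va})$; since the $p_\va$ sum to one, $E_{\mathrm{grp}}=O(\sqrt{\log(2|\mathcal{A}|/\delta)/\min_\va N_\va})$ with no extra $|\mathcal{A}|$ factor. For $E_{\mathrm{bar}}$, the same CDF bound and DKW applied to the $\bar N$ atoms of the empirical barycenter give $O(\sqrt{\log(2|\mathcal{A}|/\delta)/\bar N})$ -- this is precisely why the hypothesis constrains $\bar N$ as well as the $N_\va$. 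For $E_{\mathrm{wt}}$, I would bound the empirical cost $\W_1(\hat p_{S_\va},\hat p_{\bS})$ by $\max[1,L]$ up to lower-order sampling error, via the triangle inequality and the assumption $\W_1(p_{\Sa},p_{\bS})\le L$, so that $E_{\mathrm{wt}}\le \max[1,L]\sum_\va|p_\va-\hat p_\va|$, and then a Hoeffding and union bound on the group-membership counts gives $\sum_\va|p_\va-\hat p_\va|=O(|\mathcal{A}|\sqrt{\log(2|\mathcal{A}|/\delta)/n})$ with $n=\min[\bar N,\min_\va N_\va]$. Collecting the pieces, the dominant contribution is $O\big(|\mathcal{A}|\max[1,L]\sqrt{\log(2|\mathcal{A}|/\delta)/n}\big)$, and requiring $n\ge 16\log(2|\mathcal{A}|/\delta)|\mathcal{A}|^2\max[1,L]^2/\epsilon^2$ drives the total below $\epsilon$, matching the stated constant (the $\log(2|\mathcal{A}|/\delta)$ factor being the price of the union bound over the groups, the barycenter, and the weight events).

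The step I expect to be the main obstacle is controlling $E_{\mathrm{bar}}=\W_1(\hat p_{\bS},p_{\bS})$, because the empirical barycenter is not a collection of i.i.d.\ draws from the population barycenter but a derived object -- the $\W_1$-barycenter of the empirical group distributions. Treating it as $\bar N$ i.i.d.\ samples and invoking DKW directly is the convenient shortcut that produces the $\bar N$ dependence, but making it rigorous requires the explicit one-dimensional barycenter characterization through the group quantile functions, whose stability under perturbation of the group distributions reduces $E_{\mathrm{bar}}$ back to the already-controlled group and weight deviations, with the residual discretization/sampling error of the realized $\bar N$-atom barycenter absorbed by a final DKW application. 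Checking that this reduction introduces no hidden $|\mathcal{A}|$ dependence beyond what is claimed, and that the assumed continuity of all the CDFs legitimizes the quantile manipulations, is the delicate part of the argument.
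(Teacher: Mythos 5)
Your skeleton---the three-way triangle inequality, per-group DKW at level $\epsilon/4$ with a union bound over $\{P_S,P_{\bS}\}\cup\{P_{S_\va}\}$, and Hoeffding control of $\sum_\va|p_\va-\hat{p}_\va|$ scaled by $|\mathcal{A}|\max[1,L]$---matches the paper's argument almost term for term, but only for the warm-up case the paper treats first, namely $p_{\bS}=p_S$, where $\hat{p}_{\bS}$ genuinely is an i.i.d.\ empirical distribution and DKW applies to it directly. The genuine gap is exactly the term you flagged: for the actual barycenter, your decomposition retains $E_{\mathrm{bar}}=\W_1(\hat{p}_{\bS},p_{\bS})$ and defers its control to a sketched quantile-stability argument that you do not execute. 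The paper never bounds this term at all. It exploits the fact that the lemma is one-sided (only an \emph{upper} bound on the population cost is needed) together with the variational characterization of $p_{\bS}$: since $p_{\bS}$ minimizes $q\mapsto\sum_\va p_\va\W_1(p_{S_\va},q)$ over $\mathcal{P}(\Omega)$, one gets $\sum_\va p_\va\W_1(p_{S_\va},p_{\bS})\le\sum_\va p_\va\W_1(p_{S_\va},\hat{p}_{\bS})$ for free, with $\hat{p}_{\bS}$ playing the role of an arbitrary competitor. A single further triangle inequality, $\W_1(p_{S_\va},\hat{p}_{\bS})\le\W_1(\hat{p}_{S_\va},\hat{p}_{\bS})+\W_1(\hat{p}_{S_\va},p_{S_\va})$, then leaves only the per-group DKW terms and the weight swap you already handle correctly. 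No comparison between the empirical and population barycenters is ever required.

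Your fallback plan for $E_{\mathrm{bar}}$ is not merely delicate; for $\W_1$ it is doubtful as stated. The one-dimensional $\W_1$-barycenter has quantile function given pointwise by a \emph{weighted median} of the group quantile functions $\{P_{S_\va}^{-1}\}$ (unlike the $\W_2$ case, where it is the weighted mean); it is in general non-unique, and a weighted median is discontinuous in the weights, so ``stability under perturbation of the group distributions'' does not reduce $E_{\mathrm{bar}}$ to the already-controlled group and weight deviations without assumptions beyond the stated continuity of the CDFs. Moreover, the $\bar{N}$ atoms $\{\bar{s}^j\}$ are sampled from the \emph{computed} empirical barycenter, so your ``final DKW application'' only controls the distance to that computed object, and the derived-object discrepancy to $p_{\bS}$ survives. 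Your proof is therefore incomplete at precisely the point you identified as the main obstacle; replacing the $E_{\mathrm{bar}}$ plan with the barycenter-optimality step closes the hole, after which the rest of your argument (including bounding the empirical transport costs in $E_{\mathrm{wt}}$ by $1\le\max[1,L]$ on $\Omega=[0,1]$, where the paper instead swaps weights against the population distances bounded by $L$) goes through with the stated constants.
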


\begin{proof}
We start with the case when $p_{\bS} = p_{S}$.
By the triangle inequality for Wasserstein-1 distances, for all $\va \in\mathcal{A}$:
\begin{align}\label{equation::triangle_inequality}
\hskip-0.3cm  \hat{p}_{\va} \W_1( p_{S_\va} , p_{\bS} ) \leq \hat{p}_{\va} \W_1( \hat{p}_{S_\va}, \hat{p}_{\bS}   ) + \hat{p}_{\va} \W_1( \hat{p}_{\bS}, p_{\bS} ) +\hat{p}_{\va} \W_1(\hat{p}_{S_\va}, p_{S_\va} )\,.
\end{align}
Let $\hat{P}$ for $P \in \{  P_S, P_{\bS}\} \cup \{  P_{S_\va}\}_{\va \in \mathcal{A}}$ denote the empirical CDF of $P$. Since their domain is restricted to $[0,1]$ and are one dimensional random variables:
\begin{equation}
    \W_1(\hat{p}_{S_*}, p_{S_*}) = \int_{0}^1 |\hat{P}(x) - P(x) | dx\,.
\end{equation}
For $S_* \in \{  S, {\bS}\} \cup \{  {S_\va}\}_{\va \in \mathcal{A}}$. Since $P \in \{  P_S, P_{\bS}\} \cup \{  P_{S_\va}\}_{\va \in \mathcal{A}}$ are all continuous, the Dvorestky-Kiefer-Wolfowitz theorem (see main theorem in \cite{massart90tight} ) and the condition $\min\big[\bar{N}, \min_{\va}\big[N_\va\big]\big]\geq  \frac{16\log(2|\mathcal{A}|/\delta)|\mathcal{A}|^2\max[1, L]^2}{\epsilon^2}$ implies that:
\begin{equation*}
    \mathbb{P}\left( \sup_{x\in [0,1]} | \hat{P}(x) - P(x) | \geq \frac{\epsilon}{4} \right) \leq  \frac{\delta}{2 | \mathcal{A}| }\,.
\end{equation*}
Since all the random variables have domain $[0,1]$ this in turn implies that for all $S_* \in \{  S, {\bS}\} \cup \{  {S_\va}\}_{\va \in \mathcal{A}}$:
\begin{equation*}
    \mathbb{P}\left( \W_1(\hat{p}_{S_*}, p_{S_*}) \geq \frac{\epsilon}{4} \right) \leq  \frac{\delta}{2 | \mathcal{A}| }\,.
\end{equation*}
And therefore that with probability $\geq 1-\frac{\delta}{2}$ the following inequalities hold simultaneously for all $\va \in \mathcal{A}$:
\begin{equation}\label{eq::wass_bound_empirical1}
\hat{p}_{\va} \W_1(\hat{p}_{\bS}, p_{\bS} )   \leq \frac{\hat{p}_{\va} \epsilon}{4}, \hskip0.3cm
    \hat{p}_{\va} \W_1( \hat{p}_{S_\va},  p_{S_\va}) \leq \frac{\hat{p}_{\va} \epsilon}{4}\,.
\end{equation}
Summing \eqref{equation::triangle_inequality} over $\va$ and applying the last observation yields
\begin{equation*}
    \sum_{\va \in \mathcal{A}} \hat{p}_{\va}\W_1(p_{S_{\va}}, p_{\bS}) \leq \sum_{\va \in \mathcal{A}} \hat{p}_{\va} \W_1(\hat{p}_{S_{\va}}, \hat{p}_{\bS}) + \frac{\epsilon  }{2}\,.
\end{equation*}
Recall that we assume $\forall \va\in\mathcal{A}$,
\begin{equation*}
    \W_1(p_{S_\va}, p_{\bar{S}}) \leq L\,.
\end{equation*}
By concentration of measure of Bernoulli random variables, with probability $\geq 1-\frac{\delta}{2}$ the following inequality holds simultaneously for all $\va \in\mathcal{A}$: 
\begin{equation}\label{eq::wass_bound_empirical2}
    |p_\va - \hat{p}_{\va}| \leq \frac{\epsilon}{4| \mathcal{A} | \max[L, 1]}\,.
\end{equation}
Consequently the desired result holds:
\begin{equation*}
    \sum_{\va \in \mathcal{A}} p_{\va}\W_1(p_{S_{\va}}, p_{\bS}) \leq \sum_{\va \in \mathcal{A}} \hat{p}_{\va} \W_1(\hat{p}_{S_{\va}}, \hat{p}_{\bS}) + \epsilon\,.
\end{equation*}
If $p_{\bS}$ equals the weighted barycenter of the population level distributions $\{p_{S_a}\}$, then 
\begin{equation*}
\sum_{\va \in \mathcal{A}} p_{\va} \W_1(p_{S_a}, p_{\bS}) \leq \sum_{\va \in \mathcal{A}} p_{\va} \W_1(p_{S_a}, \hat{p}_{\bS})\,.
\end{equation*}
Since $\hat{p}_{\va} \W_1(p_{S_{\va}}, \hat{p}_{\bS}) \leq \hat{p}_{\va} \W_1(\hat{p}_{S_\va}, \hat{p}_{\bS}) + \hat{p}_{\va}\W_1(\hat{p}_{S_{\va}}, p_{S_a})$, with probability $1-\delta$:
\begin{align*}
    \sum_{\va \in \mathcal{A}} p_{\va} \W_1(p_{S_a}, p_{\bS}) &\leq  \sum_{\va \in \mathcal{A}} \hat{p}_{\va} \W_1(p_{S_a}, p_{\bS}) + \frac{\epsilon}{2}  \\
    &\leq \sum_{\va \in \mathcal{A}} \hat{p}_{\va} \W_1(\hat{p}_{S_\va}, \hat{p}_{\bS}) + \hat{p}_{\va}\W_1(\hat{p}_{S_{\va}}, p_{S_a}) + \frac{\epsilon}{2}\\
    &\leq \sum_{\va \in \mathcal{A}} \hat{p}_{\va} \W_1(\hat{p}_{S_\va}, \hat{p}_{\bS}) + \epsilon
\end{align*}

The first inequality follows from~\eqref{eq::wass_bound_empirical2}, and the third one by~\eqref{eq::wass_bound_empirical1}. The result follows.

\end{proof}

\section{Inverse CDFs}
\begin{lemma}
Given two differentiable and invertible cumulative distribution functions $f,g$ over the probability space $\Omega=[0,1]$, thus $f, g : [0,1] \rightarrow [0, 1]$, we have 
\begin{align}
    \int_{s=0}^1 |f^{-1}(s) - g^{-1}(s)|ds = \int_{\tau=0}^1 |f(\tau) - g(\tau)|d\tau.
    \label{eq:inverse_cdf}
\end{align}
\label{lemma::inverse_cdf}
\end{lemma}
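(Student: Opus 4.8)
The plan is to show that both integrals compute the area of the same planar region, namely the region enclosed between the graphs of $f$ and $g$ in the unit square, first measured by vertical strips and then by horizontal strips. The geometric intuition is that reflection across the diagonal $\{x=y\}$ carries the graph of $f$ to the graph of $f^{-1}$ and the graph of $g$ to the graph of $g^{-1}$; since reflection is area-preserving, the area between the first pair of curves equals the area between the second pair. Rather than invoke this geometry directly, I would make it rigorous through a layer-cake/Fubini argument.

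First I would write, for each fixed $\tau\in[0,1]$, the identity $|f(\tau)-g(\tau)| = \int_0^1 \1\{y \text{ between } f(\tau),g(\tau)\}\,dy$, which holds because $|f(\tau)-g(\tau)|$ is exactly the length of the interval with endpoints $f(\tau)$ and $g(\tau)$, both of which lie in $[0,1]$ since $f,g$ map into $[0,1]$. Integrating over $\tau$ and applying Tonelli's theorem (legitimate since the integrand is nonnegative and measurable) then lets me swap the order of integration:
\[
\int_0^1 |f(\tau)-g(\tau)|\,d\tau = \int_0^1 \left( \int_0^1 \1\{y \text{ between } f(\tau),g(\tau)\}\,d\tau \right) dy.
\]

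The key step is to evaluate the inner integral for fixed $y$. Here I would use that $f,g$ are strictly increasing (being invertible CDFs) with strictly increasing inverses, so that $f(\tau)\le y \iff \tau \le f^{-1}(y)$ and $g(\tau)\le y \iff \tau \le g^{-1}(y)$. Consequently the indicators $\1\{f(\tau)\le y\}$ and $\1\{g(\tau)\le y\}$ coincide with $\1\{\tau\le f^{-1}(y)\}$ and $\1\{\tau\le g^{-1}(y)\}$, and ``$y$ lies strictly between $f(\tau)$ and $g(\tau)$'' holds exactly when these two indicators disagree, i.e. exactly when ``$\tau$ lies strictly between $f^{-1}(y)$ and $g^{-1}(y)$''. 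Thus the set of $\tau$ contributing to the inner integral is the interval with endpoints $f^{-1}(y)$ and $g^{-1}(y)$, of measure $|f^{-1}(y)-g^{-1}(y)|$; the boundary coincidences form a null set and can be ignored. Substituting this into the display yields $\int_0^1 |f^{-1}(y)-g^{-1}(y)|\,dy$, which is the desired right-hand side.

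The main obstacle is the bookkeeping in that evaluation: I must handle both orderings (whether $f(\tau)\le g(\tau)$ or $f(\tau)\ge g(\tau)$, equivalently whether $f^{-1}(y)\le g^{-1}(y)$ or not) uniformly and confirm the equivalence of the two betweenness conditions without assuming which function dominates, while checking that the measure-zero boundary cases (where $f(\tau)=y$, $g(\tau)=y$, or $f=g$) do not affect the integrals. Recasting everything through the indicator identities $\1\{f(\tau)\le y\}=\1\{\tau\le f^{-1}(y)\}$ is what makes the argument order-agnostic and sidesteps the case analysis cleanly.
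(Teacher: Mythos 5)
Your proof is correct, but it takes a genuinely different route from the paper's. The paper first establishes the Laisant-type identity $\int_{x_1}^{x_2} f(x)\,dx + \int_{f(x_1)}^{f(x_2)} f^{-1}(y)\,dy = x_2 f(x_2) - x_1 f(x_1)$ via the substitution $y=f(x)$ and integration by parts (this is the one place differentiability is actually used), then decomposes $[0,1]$ into an at most countable family of maximal open intervals on which $f-g$ has constant sign --- countability proved by picking a distinct rational in each interval --- applies the identity on each interval (where $f$ and $g$ agree at the endpoints), verifies that $f>g$ on an interval forces $f^{-1}<g^{-1}$ on its image so that absolute values can be inserted, and sums over the intervals. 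You instead express $|f(\tau)-g(\tau)|$ as a layer-cake integral of the indicator of the symmetric difference of the sublevel sets $\{y: f(\tau)\le y\}$ and $\{y: g(\tau)\le y\}$, swap the order of integration by Tonelli, and use the monotone equivalence $f(\tau)\le y \iff \tau \le f^{-1}(y)$ to identify each horizontal slice as an interval of length $|f^{-1}(y)-g^{-1}(y)|$, discarding null boundary sets. Your route buys two real advantages: it eliminates the sign-interval decomposition and its countability bookkeeping entirely (the indicator formulation is order-agnostic, exactly as you observe), and it needs strictly less regularity --- strict monotonicity and surjectivity of $f,g$ suffice, so the differentiability hypothesis, which the paper uses only to justify its integration-by-parts identity, is never invoked. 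What the paper's argument buys in exchange is that it stays entirely within Riemann calculus, with no appeal to Fubini/Tonelli or measure theory, and it makes the reflection-across-the-diagonal picture (the paper's Figure) explicit interval by interval. The one point you should make airtight in a full write-up is the joint measurability on $[0,1]^2$ of your indicator before invoking Tonelli; this is immediate here since $f$ and $g$ are continuous (indeed monotone functions are already Borel), but it deserves a sentence.
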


\begin{figure}[t]
\subfigure[Left side of \eqref{eq:inverse_cdf}]{\hspace{1.5cm}
\includegraphics[height=2in,width=2.2in]{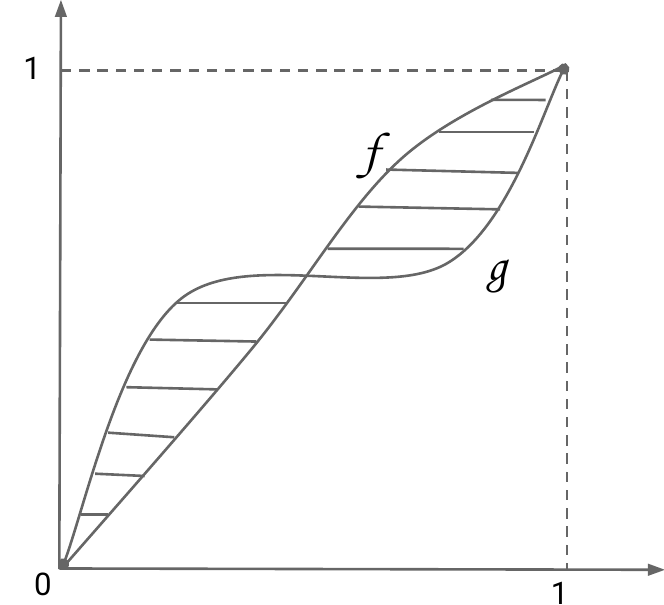}}
\subfigure[Right side of \eqref{eq:inverse_cdf}]{\hspace{1.5cm}
\includegraphics[height=2in,width=2.2in]{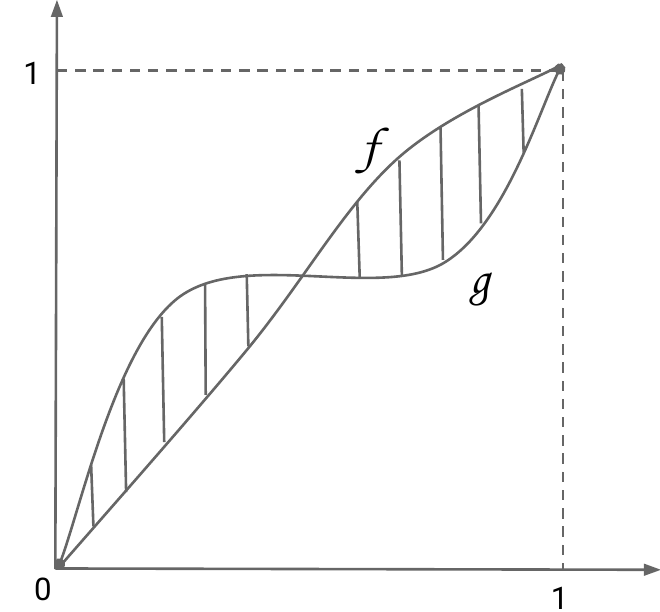}}
\caption[]{Integrating $|f^{-1} - g^{-1}|$ along the $x$ axis (left) and integrating $|f-g|$ along the $y$ axis (right) both compute the area of the same shaded region, thus the equality in \eqref{eq:inverse_cdf}.}
\label{fig:inverse_cdf}
\end{figure}
Intuitively, we see that the left and right side of \eqref{eq:inverse_cdf} correspond to two ways of computing the same shaded area in Figure \ref{fig:inverse_cdf}. Here is a complete proof.
\begin{proof}
 Invertible CDFs $f,g$ are strictly increasing functions due to being bijective and non-decreasing. Furthermore, we have $f(0)=0, f(1)=1$ by definition of CDFs and $\Omega=[0,1]$, since $P(X\leq 0) = 0, P(X\leq 1) = 1$ where $X$ is the corresponding random variable. The same holds for the function $g$. Given an interval $(x_1, x_2)\subset[0,1]$, let $y_1 = f(x_1), y_2 = f(x_2)$. Since $f$ is differentiable, we have
 \begin{align}
     \int_{x=x_1}^{x_2} f(x)dx + \int_{y=y_1}^{y_2} f^{-1}(y)dy = x_2 y_2- x_1 y_1.
     \label{eq:laisant}
 \end{align} 
 The proof of \eqref{eq:laisant} is the following (see also \cite{laisant05integration}). \begin{align*}
     &f^{-1}(f(x)) = x\\
     \Longrightarrow &f'(x) f^{-1}(f(x)) = f'(x)x \tag*{(multiply both sides by $f'(x)$)}\\
     \Longrightarrow &\int_{x=x_1}^{x_2} f'(x) f^{-1}(f(x)) dx= \int_{x=x_1}^{x_2} f'(x)x dx \tag*{(integrate both sides)}\\
     \Longrightarrow &\int_{y=y_1}^{y_2} f^{-1}(y)dy = \int_{x=x_1}^{x_2} f'(x)x dx \tag*{(apply change of variable $y=f(x)$ on the left side)}\\
     \Longrightarrow &\int_{y=y_1}^{y_2} f^{-1}(y)dy = xf(x){\bigg|}_{x=x_1}^{x_2} - \int_{x=x_1}^{x_2} f(x) dx \tag*{(integrate by parts on the right side)}\\
     \Longrightarrow &\int_{y=y_1}^{y_2} f^{-1}(y)dy +\int_{x=x_1}^{x_2} f(x)dx = x_2 y_2- x_1 y_1.
 \end{align*} 
Define a function $h \vcentcolon= f-g$ on $[0,1]$. Then $h$ is differentiable and thus continuous. Define the set of roots $A \vcentcolon= \{x \in [0,1] \mid h(x) = 0\}$. Define the set of open intervals on which either $h>0$ or $h<0$ by $B \vcentcolon= \{(a,b) \mid b = \inf\{s\in A\mid a<s\}, 0\leq a<b\leq 1, a\in A\}$. By continuity of $h$, for any $(a,b)\in B$, we have $b\in A$, \ie~$b$ is also a root of $h$. Since there are no other roots of $h$ in $(a,b)$, by continuity of $h$, we must have either $h>0$ or $h<0$ on $(a,b)$. For any two elements $(a,b), (c,d)\in B$, we argue that they must be disjoint intervals. Without loss of generality, we assume $a<c$. Since $b=\inf\{s\in A\mid a<s\}\leq c$, \ie~$b\leq c$, then $(a,b)\cap(c,d)=\emptyset$. For any open interval $(a,b)\in B$, there exists a rational number $q\in\mathbb{Q}$ such that $a<q<b$. We pick such a rational number and call it $q_{(a,b)}$. Since all elements of $B$ are disjoint, for any two intervals $(a_0, b_0), (a_1, b_1)$ containing $q_{(a_0, b_0)}, q_{(a_1, b_1)}\in\mathbb{Q}$ respectively, we must have $q_{(a_0, b_0)}\neq q_{(a_1, b_1)}$. We define the set $Q_B \vcentcolon= \{q_{(a,b)}\in\mathbb{Q} \mid (a,b)\in B\}$. Then $Q_B\subset \mathbb{Q}$ and $|Q_B| = |B|$. Since the set of rational numbers $\mathbb{Q}$ is countable, the set $B$ must also be countable. Let $B=\{(a_i, b_i)\}_{i=0}^N$ where $N\in\mathbb{N}$ or $N=\infty$. Recall that $h = f-g$ on $[0,1]$, $h(a_i)=0, h(b_i)=0$ and either $h<0$ or $h>0$ on $(a_i, b_i)$ for $\forall i>0$.

Consider the interval $(a_i, b_i)$ for some $i>0$, by Eq.\ref{eq:laisant} we have 
\begin{align*}
    &\int_{\tau=a_i}^{b_i} f(\tau)d\tau + \int_{s=f(a_i)}^{f(b_i)} f^{-1}(s)ds = b_i f(b_i)- a_i f(a_i) \\
    &= b_i g(b_i)- a_i g(a_i) = \int_{\tau=a_i}^{b_i} g(\tau)d\tau + \int_{s=g(a_i)}^{g(b_i)} g^{-1}(s)ds.
\end{align*}
Thus \[
\int_{\tau=a_i}^{b_i} f(\tau) - g(\tau) d\tau = \int_{s=f(a_i)}^{f(b_i)} g^{-1}(s)-f^{-1}(s)ds.
\]
Notice that if $f > g$ on $[a_i, b_i]$, then $f^{-1} < g^{-1}$ on $[f(a_i), f(b_i)]$. This is due to the following. Given any $y\in [f(a_i), f(b_i)]=[g(a_i), g(b_i)]$, we have $g^{-1}(y)\in[a_i, b_i]$ and $f(g^{-1}(y))> g(g^{-1}(y)) = y = f(f^{-1}(y))$. Thus $g^{-1} > f^{-1}$ since $f$ is strictly increasing. The contrary holds by the same reasoning, \ie~if $f < g$ on $[a_i, b_i]$, then $f^{-1} > g^{-1}$ on $[f(a_i), f(b_i)]$. Therefore, 
\[
\int_{\tau=a_i}^{b_i} |f(\tau) - g(\tau)| d\tau = \int_{s=f(a_i)}^{f(b_i)} |g^{-1}(s)-f^{-1}(s)| ds,
\]
which holds for all intervals $(a_i, b_i)$. Summing over $i$ on both sides, we have
 \[
 \sum_{i=0}^N \int_{\tau=a_i}^{b_i} |f(\tau) - g(\tau)| d\tau =  \sum_{i=0}^N \int_{s=f(a_i)}^{f(b_i)} |g^{-1}(s)-f^{-1}(s)| ds,
 \]
 or equivalently, 
 \[
 \int_{s=0}^1 |f^{-1}(s) - g^{-1}(s)|ds = \int_{\tau=0}^1 |f(\tau) - g(\tau)|d\tau.
 \]
\end{proof}

\end{document}